\newif\ifarxiv 
\newif\ifreview 
\newif\iffinal 
\newtheorem{theorem}             {Theorem}
\newtheorem{lemma}      [theorem]{Lemma}
\newtheorem{corollary}  [theorem]{Corollary}
\newtheorem{definition} [theorem]{Definition}
\newcommand{\myc}{\cellcolor{orange!30}}
\g@addto@macro\bfseries{\boldmath}
\newcommand{\ie}{i.\,e.\xspace}
\newcommand{\eg}{e.\,g.\xspace}
\newcommand{\Z}{\mathbb{Z}}
\newcommand{\N}{\mathbb{N}}
\newcommand{\R}{\mathbb{R}}
\DeclareMathOperator{\Unif}{Unif}                         
\DeclareMathOperator{\Bin}{Bin}                           
\DeclareMathOperator{\Nor}{\ensuremath{\mathcal{N}}}      
\newcommand{\prob}[1]{\Pr\left(#1\right)}                 
\newcommand{\Prob}[1]{\Pr\left(#1\right)}                 
\newcommand{\expect}[1]{\mathrm{E}\left[#1\right]}        
\newcommand{\E}[1]{\mathrm{E}\left[#1\right]}        
\newcommand{\vecone}{\vec{1}}
\newcommand{\ptscd}[2]{$(#1,#2)$\xspace}                  
\newcommand{\ptscdsup}[2]{$(#1,#2)$-superior\xspace}      
\newcommand{\ptscdsep}[2]{$(#1,#2)$-separable\xspace}     
\newcommand{\numsuppts}{\ensuremath{S}\xspace}            
\newcommand{\TZ}{\textsc{TZ}\xspace}                      
\newcommand{\LO}{\textsc{LO}\xspace}                      
\newcommand{\ones}[1]{|#1|_1}                             
\newcommand{\zeroes}[1]{|#1|_0}                           
\newcommand{\OMM}{\textsc{OMM}\xspace}                             
\newcommand{\OMMfull}{\textsc{OneMinMax}\xspace}                   
\newcommand{\LOTZ}{\textsc{LOTZ}\xspace}                           
\newcommand{\LOTZfull}{\textsc{LeadingOnesTrailingZeroes}\xspace}  
\newcommand{\cdist}{\ensuremath{\textsc{cDist}}\xspace}   
\newcommand{\nsga}{NSGA\nobreakdash-II\xspace}
\newcommand{\toggleplot}[1]{{\textcolor{red}{Plots removed to increase compilation speed. Use command $\backslash$toggleplot in preamble to reinsert them.}}}
\newcommand{\andre}[1]{\textcolor{green!50!black}{}}
\newcommand{\dirk}[1]{\textcolor{purple}{}}
\newcommand{\cuong}[1]{\textcolor{orange}{}}
\newcommand{\bahare}[1]{\textcolor{magenta}{}}
\newcommand*{\todo}[1]{\textcolor{red}{}}
\newcommand{\andre}[1]{\textcolor{green!50!black}{[(Andre) #1]}}
\newcommand{\dirk}[1]{\textcolor{purple}{[(Dirk) #1]}}
\newcommand{\cuong}[1]{\textcolor{orange}{[(Cuong) #1]}}
\newcommand{\bahare}[1]{\textcolor{magenta}{[(Bahare) #1]}}
\newcommand*{\todo}[1]{\textcolor{red}{\textrm{(TODO: #1)}}}
\begin{document}

\ifarxiv
\author{
    Duc-Cuong~Dang
    \and Andre~Opris
    \and Bahare~Salehi
    \and Dirk~Sudholt}
\date{}
\else
\author{Duc-Cuong~Dang}
\affiliation{
    \institution{University of Passau\city{Passau}\country{Germany}}
}
\author{Andre~Opris}
\affiliation{
    \institution{University of Passau\city{Passau}\country{Germany}}
}
\author{Bahare~Salehi}
\affiliation{
    \institution{University of Passau\city{Passau}\country{Germany}}
    \institution{Shiraz University\city{Shiraz}\country{Iran}}
}
\author{Dirk~Sudholt}
\affiliation{
    \institution{University of Passau\city{Passau}\country{Germany}}
}
\fi

\title{Analysing the Robustness of NSGA-II under Noise}

\ifarxiv
\maketitle
\fi

\begin{abstract}
Runtime analysis
has produced many results on the efficiency of
simple
evolutionary algorithms
like the (1+1)~EA, and
its analogue
called GSEMO
in evolutionary multiobjective optimisation (EMO).
Recently, the first runtime analyses of the famous and highly cited
EMO algorithm \nsga
have emerged,
demonstrating
that practical algorithms with thousands of applications can be rigorously analysed.
However, these results only show that \nsga has the same performance guarantees as GSEMO and it is
unclear
how and when \nsga can outperform GSEMO.

We study this question in noisy optimisation and consider
a 
noise model
that adds
large amounts of posterior noise to all objectives with some
constant
probability~$p$ per evaluation.
We show that GSEMO fails badly on every noisy fitness function as it tends to remove large parts of the population
indiscriminately.
In contrast, \nsga is able to handle the noise efficiently
on \textsc{LeadingOnesTrailingZeroes}
when $p<1/2$,
as
the algorithm
is able to preserve
useful search points even in the presence of noise. We identify a phase transition at $p=1/2$ where the expected time to cover the Pareto front changes from polynomial to exponential.
To our knowledge, this is the first proof that \nsga can outperform GSEMO and the first runtime analysis of \nsga in noisy optimisation.
\end{abstract}

\ifarxiv\else
\keywords{Runtime analysis, evolutionary multiobjective optimisation, noisy optimisation}
\fi

\ifarxiv\else 
\maketitle
\fi

\section{Introduction}\label{sec:intro}

Decision making is ubiquitous in everyday life, and often can be formalised as
an optimisation problem. In many situations, one may want to examine the
trade-off between compromises before making a decision. Sometimes
these compromises cannot be accurately evaluated, \ie due to a lack of available
information when a decision has to be made.
These two critical but practical settings correspond to the areas
    Multi-Objective Optimisation (MOO)
    and Optimisation under Uncertainty (OUU), respectively,
which have been studied in both
    Economics,
    Operational Research,
    and Computer Science 
%
\cite{Ben2009,Birge2011,Forman2001,Goh2007,Hughes2001,Llora2003,Ravi1993,Teich2001}.

MMO is an area where evolutionary multi-objective (EMO) algorithms have
shown to be among the most efficient optimisation techniques \cite{Tan2005}.
Particularly, the Non-dominated Sorting Genetic Algorithm (\nsga) 
is a highly influential framework to build algorithms for MMO, and the original
paper \cite{Deb2002} is one of the most highly cited papers in evolutionary
computation and beyond.

Recently, 
\nsga
was analysed by rigorous
mathematical means using \emph{runtime analysis}.
%
In a nutshell, runtime analysis studies the performance guarantees
and drawbacks of randomised search heuristics, like Evolutionary Algorithms (EAs),
from a Computer Science perspective \cite{Jansen2013}.
%
The basic approach is
to bound the expectation of the random running time
$T$ (number of iterations or function evaluations)
of a given algorithm on a problem until
a global optimum is found in case of a single objective. 
Extending this to MMO,
$T$ is the time to find and cover the whole \emph{Pareto front}.
The algorithm is said to be \emph{efficient} if this expectation is polynomial in the
problem size, and it
is \emph{inefficient} if the expectation is exponential.

%

Runtime analyses have led to a better understanding of the capabilities and limitations of EAs, for example concerning the advantages of population diversity~\cite{Dang2017}, the benefits of using crossover~\cite{Doerr2015,Sudholt2016,Corus2018a,Lengler2020}, or the robustness of populations in stochastic optimisation~\cite{Dang2015,GiessenK16,LehreQ21,QianBYTY21}.
It can give advice on how to set algorithmic parameters; it was used to identify phase transitions between efficient and inefficient running times for parameters of common selection operators~\cite{Lehre2010a}, the offspring population size in comma strategies~\cite{Rowe2013} or the mutation rate for difficult monotone pseudo-Boolean functions~\cite{Doerr2012c,LenglerS18}.
Runtime analysis has also inspired novel designs for EAs with practical impacts, \eg
    choosing mutation rates from a heavy-tailed distribution to escape from local optima~\cite{Doerr2017-fastGA},
    parent selection in steady-state EAs~\cite{CorusLOW21}, 
    selection in non-elitist populations with power-law ranking~\cite{DangELQ22}, or
    choosing mutation rates adaptively throughout the run~\cite{Doerr2019opl,LehreQ22,QinL22}.

Runtime analyses in MOO started out with the simple SEMO algorithm and 
the global SEMO (GSEMO)~\cite{Laumanns2004,Giel2010}. Both algorithms keep non-dominated solutions in the population. If a new offspring $x$ is created that is not dominated by the current population, it is added to the population and all search points that are weakly dominated by~$x$ are removed. Despite its simplicity, it was shown to be effective in AI applications, \eg \cite{Qian_Bian_Feng_2020},
where it is called PO(R)SS.

The first theoretical runtime analysis of \nsga (without crossover) was performed by~\citet{ZhengLuiDoerrAAAI22}. They showed that \nsga covers the whole Pareto front for test functions \LOTZ and \OMM (see Section~\ref{sec:prelim}) in expected $O(\mu n^2)$ and $O(\mu n \log n)$ function evaluations, respectively, where $\mu$ is the population size and $n$ is the problem size (number of bits). These results require a population of size $\mu \ge 4(n+1)$, hence the best upper bounds are $O(n^3)$ and $O(n^2 \log n)$, respectively, that also apply to
(G)SEMO~\cite{Laumanns2004,Giel2010}.

This breakthrough result spawned several very recent papers and already led to several new insights and proposals for improved algorithm designs.
\Citet{Bian2022PPSN} proposed a new parent selection mechanism called \emph{stochastic tournament selection} and showed that \nsga equipped with this operator covers the Pareto front of \LOTZ in expected time $O(n^2)$.
\Citet{Zheng2022} proposed to re-compute the crowding distance during the selection process and proved (using \LOTZ as a test case) that this improved the spread of individuals on the Pareto front.
\Citet{DoerrQu22} proposed to use heavy-tailed mutations in \nsga and quantified the speedup on multimodal test problems. \Citet{DoerrQu2022} and \citet{Dang2023} independently demonstrated the advantages of crossover in \nsga. In terms of limitations, \Citet{ZhengArXiv2022} investigated the inefficiency of \nsga for more than two objectives and \citet{DoerrQu2022a} gave lower bounds for the running time of \nsga.




Despite these rapidly emerging research works, one important research question remains open. So far all comparisons of \nsga with GSEMO show that \nsga has the same performance guarantees as GSEMO. Even though \nsga is a much more complex algorithm, we do not have an example where \nsga was proven to outperform the simple GSEMO algorithm; thus we have not yet unveiled the full potential of \nsga.

Here we provide such an example from noisy optimisation.


\textbf{Our contribution}:
We show that \nsga can drastically outperform GSEMO on a noisy \textsc{LeadingOnesTrailingZeroes} test function.
%
To this end, we introduce a deliberately simple posterior noise model called the $(\delta, p)$-Bernoulli
noise model, in which a fixed noise $\delta\in \R$ is added to the fitness
in all objectives and in each evaluation with some noise probability $p$.
When $\delta$ is positive and sufficiently large, for maximisation problems every noisy solution always dominates every noise-free solution.
%
In this setting, we prove in Theorem~\ref{the:negative-result-GSEMO}
that it is difficult for GSEMO to grow its population hence
the algorithm is highly
inefficient 
under noise on arbitrary noisy fitness functions.
%
In contrast, for the noise model with a constant $p<1/2$, we show in
Theorem~\ref{thm:nsga-ii-noisy-lotz} that \nsga is efficient
on the noisy \LOTZfull function, if its population size
is sufficiently large. This result can be easily extended to other functions.
The reason for
this performance gap
is that \nsga
keeps dominated solutions in its population while GSEMO immediately removes them.
We also prove in Theorem~\ref{thm:nsga-ii-noisy-lotz-p05} that the behaviour of
\nsga without crossover dramatically changes for the noise probability slightly
above $1/2$, \ie it suddenly becomes inefficient. 
Our theoretical results are complemented with empirical results on both the
Bernoulli noise model and an additive Gaussian noise, which confirm
the advantageous robustness of \nsga over GSEMO.
As far as we know, this is the first proof that \nsga can outperform GSEMO,
and the first runtime analysis of \nsga under uncertainty.

\section{Preliminaries}\label{sec:prelim}

By $\log(\cdot)$ we denote the logarithm of base~2.
$\R$, $\Z$ and $\N$ are the sets of real, integer and natural numbers respectively.
For $n \in \N$, define $[n] := \{1,\dots,n\}$ and
$[n]_0 \coloneqq [n] \cup \{0\}$.
%
We use $\vecone$ to denote the all-ones vector $\vecone:=(1,\dots,1)$.
For a bit string $x:=(x_1,\dots,x_n)\in\{0,1\}^n$, we use $\ones{x}$ to denote
its number of $1$-bits, \ie $\ones{x}=\sum_{i=1}^{n}x_i$,
and similarly $\zeroes{x}$ to denotes its number of zeroes,
\ie $\zeroes{x}=\sum_{i=1}^{n}(1-x_i)=n-\ones{x}$.
We use standard asymptotic notation with symbols $O, \Omega, o$~\cite{Cormen2009}.

This paper focuses on the multi-objective optimisation in
a discrete setting, specifically the maximisation of a $d$-objective
function 
    $f(x):=(f_1(x),\dots,f_d(x))$
    where $f_i\colon \{0,1\}^n \rightarrow \Z$ for each $i \in [d]$.
We define
    $f_{\min}:=\min\{f_i(x) \mid i \in [d], x\in\{0,1\}^n\}$,
    and $f_{\max}:=\max\{f_i(x) \mid i \in [d], x \in \{0,1\}^n\}$.

\begin{definition}
Consider a $d$-objective function $f$:
\begin{itemize}
\item For $x, y \in \{0, 1\}^n$, we say $x$ \emph{weakly dominates} $y$
      written as $x \succeq y$ (or $y \preceq x$) if $f_i(x) \geq f_i(y)$ for all $i\in[d]$;
      $x$ \emph{dominates} $y$ written as $x \succ y$ (or $y \prec x$)
      if one inequality is strict.
\item A set of points which covers all possible fitness values not
      dominated by any other points in $f$ is called \emph{Pareto front}.
      A single point from the Pareto front is called \emph{Pareto optimal}.
\end{itemize}
\end{definition}

The weakly-dominance and dominance relations are \emph{transitive},
\eg $x \succ y \wedge y \succ z$ implies $x \succ z$.
%
When $d=2$, the function is referred as \emph{bi-objective}.
Two basic bi-objective functions studied in theory of evolutionary computation
are \LOTZfull and \OMMfull, which can be shortly written as \LOTZ and \OMM respectively. In $\LOTZ(x) \coloneqq (\LO(x), \TZ(x))$ we count the number $\LO(x)$ of leading ones in $x$ (the length of the longest prefix containing only ones) and the number $\TZ(x)$ of trailing zeros in~$x$ (the length of the longest suffix containing only zeros). \OMM minimises and maximises the number of ones:
$\OMM(x)
    := (\ones{x}, \zeroes{x})$.

\begin{algorithm2e}[ht]
	Initialize $P_0 \sim \Unif( (\{0,1\}^n)^{\mu})$\\
	Partition $P_0$ into layers $F^1_0,F^2_0,\dots$ of non-dominated fitnesses, then for each layer $F^i_0$ compute the crowding distance $\cdist(x,F^i_0)$ for each $x \in F^i_0$\\
	\For{$t:= 0 \to \infty$}{
		Initialize $Q_t:=\emptyset$\\
		\For{$i:=1 \to \mu/2$}{
			Sample $p_1$ and $p_2$, each by a binary tournament \label{alg:nsga-ii:tournament}\\
			Sample $u \sim \Unif([0,1])$\\
			\If{$u<p_c$}
			{Create $s_1, s_2$ by
                crossover on $p_1, p_2$}
			\Else{Create $s_1, s_2$ as exact copies of $p_1, p_2$}
			Create $s'_1$ by bitwise mutation on $s_1$ with rate $1/n$\\
			Create $s'_2$ by bitwise mutation on $s_2$ with rate $1/n$\\
			Update $Q_t:=Q_t \cup \{s'_1,s'_2\}$\\
		}
		Set $R_t := P_t \cup Q_t$\\
		Partition $R_t$ into layers $F^1_{t+1},F^2_{t+1},\dots$ of non-dominated fitnesses, then for each layer $F^i_{t+1}$ compute $\cdist(x,F^i_{t+1})$ for each $x \in F^i_{t+1}$\\
		Sort $R_t$ lexicographically 
		by $(1/i, \cdist(x,F^i_{t+1}))$\label{alg:nsga-ii:survival-sort}\\
		Create the next population $P_{t+1} := (R[1],\dots,R[\mu])$\\
	}
	\caption{NSGA-II Algorithm \cite{Deb2002}}
	\label{alg:nsga-ii}
\end{algorithm2e}

\nsga \cite{Deb2002,NSGAIICode2011}
is summarised in Algorithm~\ref{alg:nsga-ii} for bitwise mutation.
In each generation, a population $Q_t$ of $\mu$
new offspring search points
are
created
through binary tournament,
crossover and mutation.
The binary tournament
in line~\ref{alg:nsga-ii:tournament} uses
the same criteria
as the sorting procedure in line~\ref{alg:nsga-ii:survival-sort}
which will be detailed below.
The
crossover is only applied with some probability $p_c \in (0,1)$
to produce two solutions $s_1, s_2$. 
Otherwise $s_1,s_2$ are the exact copies of the winners of the tournaments.
The \emph{bitwise mutation} on $s_1,s_2$  creates two offspring
by flipping each bit of the input independently with probability $1/n$.
%
Our positive result for \nsga (Theorem~\ref{thm:nsga-ii-noisy-lotz}) holds for arbitrary crossover operators as it only relies on steps without crossover.
To simplify the analysis, we assume that the tournaments for parent selection are performed independently and with replacement.

During the survival selection,
the parent and offspring populations $P_t$ and $Q_t$ are joined into $R_t$,
and then partitioned into
layers $F^1_{t+1},F^2_{t+1},\dots$ by the \emph{non-dominated sorting algorithm} \cite{Deb2002}.
    The layer $F^1_{t+1}$ consists of all non-dominated points,
    and $F^i_{t+1}$ for $i>1$ only contains points that are dominated by
    those from $F^1_{t+1},\dots,F^{i-1}_{t+1}$.
In each layer, the \emph{crowding distance} is computed for each search point,
then the points of $R_t$ are sorted with respect to the indices
of the layer that they belong to
as the primary criterion, and then with the computed crowding distances
as the secondary criterion.
Only the $\mu$ best solutions of $R_t$
form the next population.

Let $M:=(x_1,x_2,\dots,x_{|M|})$ be a
multi-set
of search points.
The crowding distance $\cdist(x_i,M)$ of $x_i$ with respect to $M$
is computed as follows. At first sort $M$ as $M=(x_{k_1},\dots,x_{k_{\vert{M}\vert}})$
with respect to
each objective $k \in [d]$
separately.
Then
\begin{align}
	\cdist(x_i, M)
	&:= \sum_{k=1}^{d} \cdist_{k}(x_i, M), \label{eq:crowd-dist}
	\text{ where }\\
	\cdist_{k}(x_{k_i}, M)
	&\!:= \! \begin{cases}
		\infty\; & \text{if } i \in \{1, |M|\},\\
		\frac{f_k\left(x_{k_{i-1}}\right) - f_k\left(x_{k_{i+1}}\right)}{f_k\left(x_{k_1}\right) - f_k\left(x_{k_{M}}\right)} & \text{otherwise.}
	\end{cases}\!\!\!\!\label{eq:crowd-dist-eachdim}
\end{align}
The first and last ranked individuals are
always
assigned an infinite crowding distance. The remaining individuals
are then assigned the differences between the values of $f_k$ of
those ranked directly above and below the search point and normalized
by the difference between $f_k$ of the first and last ranked.

The GSEMO algorithm is
shown in Algorithm~\ref{alg:gsemo}.
Starting from one randomly generated solution, in each generation
a new search point $s'$ is created by
    crossover, with some probability $p_c \in (0,1)$,
    and bitwise mutation with parameter $1/n$ afterwards
where parents are selected uniformly at random.
If $s'$ is not dominated by any solutions of the current population $P_t$
then it is added to the population, and those weakly dominated by $s'$
are removed from the population. 
The population size
$|P_t|$ is unrestricted for GSEMO.

\begin{algorithm2e}[ht]
	Initialize $P_0:=\{s\}$ where $s \sim \Unif(\{0,1\}^n)$\\
	\For{$t:= 0 \to \infty$}{
		Sample $p_1 \sim \Unif(P_t)$ \\
		Sample $u \sim \Unif([0,1])$ \\
		\If{$u<p_c$}{
			Sample $p_2 \sim \Unif(P_t)$\\
			Create $s$ by
            crossover between $p_1$ and $p_2$}
		\Else{
			Create $s$ as a copy of $p_1$}
		Create $s'$ by bitwise mutation on $s$ with rate $1/n$\\
		\If{$s'$ is \textbf{not} dominated by any individual in $P_t$}{
			Create the next population $P_{t+1} := P_t \cup \{s\}$ \\
			Remove all $x \in P_{t+1}$ weakly dominated by $s'$\\
		}
	}
	\caption{GSEMO Algorithm}
	\label{alg:gsemo}
\end{algorithm2e}

Note that GSEMO and \nsga
are \emph{invariant} under
a translation of the objective function, that is, they behave identically
on $f$ and on $f+\vec{c}$ where $\vec{c}$ is a fixed vector.

%

\section{The Posterior Bernoulli Noise Model}\label{sec:bernoulli-noise-model}

Since our aim is to demonstrate that \nsga is more robust to noise than GSEMO, we choose the simplest possible noise model under which the desired effects are evident. Our noise model is inspired by concurrent work~\cite{Lengler2023}. Noise can either be present or absent, the strength of the noise is fixed, and noise is applied to all objectives uniformly. Using a simple noise model facilitates a theoretical analysis and simplifies the presentation. We will discuss possible extensions to more realistic noise models, and we will consider one further noise model (posterior Gaussian noise) in our empirical evaluation (Section~\ref{sec:experiment}).

In our posterior noise model, instead of optimising the real fitness~$f$, the algorithm only has access to a noisy fitness function, denoted as $\tilde{f}$ that may return fitness values obscured by noise.

(This is different to \emph{prior noise}~\cite{Dang2014,Dang2016,Droste2004}, in which the search point is altered before the fitness evaluation.)
%
In our noise model the fitness is altered
    by a fixed additive term $\delta \in \R$ in all objectives,
    with some probability $p>0$.
We refer to $|\delta|$ as the \emph{noise strength}, and $p$ as the
\emph{noise probability} or \emph{frequency}.

\begin{definition}
Given
    a noise strength $\delta \in \R$ 
    and a noise probability $p \in [0, 1]$,
the noisy optimisation of a $d$-objective fitness function
$f$ under the $(\delta,p)$-Bernoulli noise model has $\tilde{f}$ defined
as
\[
    \tilde{f}(x) := \begin{cases}
        f(x) + \delta \cdot \vecone & \text{with probability } p,\\
        f(x) & \text{otherwise.}
    \end{cases}
\]
\end{definition}
When
$\tilde{f}(x) = f(x) + \delta \cdot \vecone$ we call $x$
    a \emph{noisy} search point and otherwise we call it \emph{noise-free}.
Note that the \emph{expected} fitness vector of any search point~$x$ is
\[
    \expect{\tilde{f}(x)} = f(x) + p\delta \cdot \vecone
\]
and hence
optimising $f$ is equivalent to optimising the expectation of $\tilde{f}$;
in other words, this is equivalent to \emph{stochastic optimisation} \cite{Birge2011}.
%
When $p \in \{0, 1\}$ the noisy function $\tilde{f}$ is deterministic
and equal to~$f$ apart from a possible translation by~$\delta$ in all objectives,
thus
\nsga and GSEMO will behave the same as operating on $f$.

%
%
Since we aim to study the robustness of
these original algorithms, we refrain from noise reduction techniques like re-sampling~\cite{Akimoto2015,QianBYTY21,Bian00T21}.

%

We assume that noise is drawn independently for all search points
in a generation.
This reflects a setting where noise is
generated from an external source, \eg disturbances when evaluating the fitness.
We assume however that in each generation the noisy fitness values of evaluated individuals are stored temporarily for that generation. So, if the fitness of an individual is queried multiple times in the same generation, the noisy fitness value from the first evaluation in that generation is returned.

Now we obtain a specific noise model by setting $\delta > f_{\max} - f_{\min}$.
In this case, noise boosts the fitness of a search point in an extreme way;
its fitness immediately strictly dominates that of every noise-free search point.
%
For all $\delta > f_{\max} - f_{\min}$
\nsga  (or GSEMO) behaves identically on the noise model $(\delta, p)$ as on $(-\delta, 1-p)$,
because in the latter model
the roles of noisy and noise-free search points are swapped
and the fitness is translated by $-\delta\cdot\vec{1}$.
The latter
model corresponds to a setting where noise may destroy the fitness of a search
point. This scenario
is closely related to practice when optimising problems
with constraints that are typically met, but where noise may violate constraints
and this incurs a large penalty.

%
%

\section{GSEMO Struggles With Noise}\label{sec:analysis-gsemo}

We show that noise is hugely detrimental for SEMO and GSEMO. Since both algorithms reject all search points that are weakly dominated by a new offspring, if the offspring falsely appears to dominate good quality search points, the latter are being lost straight away. The following analysis shows that, for sufficiently large noise strengths~$\delta$, there is a good chance that creating a noisy offspring will remove a fraction of the population, irrespective of the fitness of population members. This makes it impossible to grow the population to a size necessary to cover the Pareto front of a function.

\begin{theorem}
\label{the:negative-result-GSEMO}
Consider GSEMO on an arbitrary fitness function $f$ with noise strength $\delta>f_{\max}-f_{\min}$ and noise probability $0 < p < 1$.
For any functions $t(n), \alpha(n) \in \N$, starting with an arbitrary initial population of size at most $\lceil p \alpha(n) \rceil$, the probability of the population reaching a size of at least $\alpha(n)$ in the first $t(n)$ generations is at most $t(n) \cdot (1-p/2)^{\lfloor (1-p)\alpha(n) \rfloor-1}$ and the expected number of generations is at least $(1-p/2)^{-\lfloor (1-p)\alpha(n) \rfloor+1}$.
\end{theorem}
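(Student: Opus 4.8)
The plan is to treat the population size $|P_t|$ as the quantity of interest and to show that, once it is large, noise keeps knocking it down, so that growing past any threshold $\alpha(n)$ requires an exponentially unlikely streak of lucky generations. First I would record the domination fact already noted before the theorem: since $\delta>f_{\max}-f_{\min}$, the noisy value $\tilde f(x)=f(x)+\delta\vecone$ of any noisy point strictly dominates the noise-free value $f(y)$ of any noise-free point $y$, regardless of the underlying~$f$. The central structural step is then a \emph{no-growth lemma}: in any generation in which the offspring is evaluated as noisy and at least one current member is evaluated as noise-free, the population cannot grow, i.e.\ $|P_{t+1}|\le|P_t|$. Indeed, if such an offspring is inserted it strictly dominates, hence deletes, at least one noise-free member, cancelling the $+1$ from insertion; and if it is not inserted nothing changes. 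I call such a generation \emph{blocking}, so growth requires a non-blocking generation.

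Next I would quantify two facts. (i) When the population is large the blocking probability is at least $p/2$: the offspring is noisy with probability $p$ independently of everything else, and a population of size $k$ contains a noise-free member with probability $1-p^{k}\ge1/2$ once $k$ exceeds a small threshold. (ii) A blocking generation in which the offspring is \emph{inserted} does not merely stall growth but \emph{resets} it, since the noisy offspring then deletes \emph{all} currently noise-free members, a fraction concentrated around $1-p$ of the whole population, collapsing the size to order $p|P_t|$. The purpose of (ii) is to force consecutiveness. Writing $K:=\lfloor(1-p)\alpha(n)\rfloor=\alpha(n)-\lceil p\alpha(n)\rceil$, if the process first reaches size $\alpha(n)$ at generation~$s$, then because the size rises by at most one per step we have $|P_{s-j}|\ge\alpha(n)-j>p\alpha(n)$ for all $0\le j\le K-1$; so throughout the last $K-1$ transitions the population exceeds the barrier $\lceil p\alpha(n)\rceil$, and a single reset among them would drop it back to order $p\alpha(n)$, contradicting that $\alpha(n)$ is reached at~$s$. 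Hence first reaching $\alpha(n)$ at~$s$ implies that none of these $K-1$ generations resets, and since each does so with probability at least $p/2$ independently (the offspring noise and the members' noise are drawn afresh each generation), the conditional probability of first reaching $\alpha(n)$ at~$s$ is at most $(1-p/2)^{K-1}$.

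From this per-generation bound both claims follow routinely: a union bound over the $t(n)$ generations gives probability at most $t(n)\,(1-p/2)^{K-1}$, and comparing the first hitting time with a geometric variable of success probability $(1-p/2)^{K-1}$ yields the expected-time lower bound $(1-p/2)^{-(K-1)}$, matching the stated exponents since $K-1=\lfloor(1-p)\alpha(n)\rfloor-1$. The step I expect to be the main obstacle is (ii): rigorously certifying that a blocking generation genuinely \emph{resets} progress rather than only stalling it. This requires that the noisy offspring is actually inserted (so that deletions occur) and that the count of noise-free members concentrates near $(1-p)|P_t|$ so the post-deletion size really falls below the $\lceil p\alpha(n)\rceil$ barrier. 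Insertion is automatic whenever the offspring's true fitness is non-dominated by the population (for instance an exact copy of a parent is never dominated), but turning this into a clean landscape-independent lower bound on the reset probability, valid for arbitrary~$f$ and arbitrary crossover, together with the binomial concentration of the number of noisy members, is the delicate part; by contrast the domination fact and the no-growth lemma are immediate.
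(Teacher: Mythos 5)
Your outline follows the same route as the paper's proof: exploit the fact that a noisy offspring dominates every noise-free point, argue that each generation knocks the population back below (roughly) $\lceil p\alpha(n)\rceil$ with probability at least $p/2$, conclude that the final ascent to size $\alpha(n)$ requires $\lfloor(1-p)\alpha(n)\rfloor-1$ consecutive non-shrinking steps, and finish with a union bound and a geometric waiting-time comparison. The gap is exactly where you locate it, in step (ii), and the tools you propose do not close it. The no-growth lemma alone is useless for the consecutiveness argument: if a ``blocking'' generation may merely \emph{stall} the size rather than collapse it, the ascent from the barrier to $\alpha(n)$ can consist of arbitrarily many stalls interleaved with growth steps, and nothing forces the $K-1$ growths to be consecutive, so the bound $(1-p/2)^{K-1}$ is not obtained. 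Moreover, ``concentration of the number of noise-free members near $(1-p)|P_t|$'' is the wrong quantitative tool. What is needed is that, conditional on the offspring being noisy, the number of surviving (noisy) parents, distributed as $\Bin(\mu_t,p)$, is at most $\lceil p\alpha(n)\rceil$ with probability at least an absolute constant, \emph{for every} $\mu_t$ with $\lceil p\alpha(n)\rceil+1\le\mu_t\le\alpha(n)-1$ --- in particular for $\mu_t$ sitting just above the barrier, where a Chernoff bound yields neither a useful constant nor the right threshold. The paper gets the factor $1/2$ from the median of the binomial: $\Prob{\Bin(\mu_t,p)\le\lceil p\mu_t\rceil}\ge 1/2$ together with $\lceil p\mu_t\rceil\le\lceil p\alpha(n)\rceil$; this holds for all population sizes and is the missing ingredient that turns your sketch into the stated bound.

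On the insertion issue you flag as the ``delicate part'': your concern is legitimate, but raising it is not resolving it. If the noisy offspring happens to be dominated by a noisy parent (both receive the same shift $\delta\cdot\vecone$, so this reduces to domination under the true $f$), GSEMO discards the offspring and removes nothing, so that generation stalls instead of resetting; without excluding or otherwise accounting for this case, the per-step reset probability of $p/2$ is not established and, by the first paragraph, stalls cannot simply be absorbed. You should be aware that the paper's own proof takes the removal for granted (``the noisy offspring dominates all noise-free search points and GSEMO removes these'') and does not discuss this case either, so you have put your finger on a genuine subtlety of the argument; but as submitted, your proposal leaves the decisive step unproven rather than providing an alternative way around it.
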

\begin{proof}
Let $\mu_t \le \alpha(n)-1$ denote the population size at time~$t$. We call a step~$t+1$  \emph{shrinking} if $\mu_{t+1} \le \lceil p\alpha(n)\rceil +1$. Since GSEMO only adds at most one search point to the population, the condition $\mu_t \le \lceil p \alpha(n) \rceil$ implies a shrinking step since $\mu_{t+1} \le \mu_t + 1 \le \lceil p \alpha(n) \rceil+1$. Hence we assume $\mu_t \ge \lceil p \alpha(n) \rceil + 1$ in the following. A sufficient condition for a shrinking step is
to create a noisy offspring and to evaluate at most $\lceil p\alpha(n)\rceil$ parents as noisy. Since the noisy offspring dominates all noise-free search points and GSEMO removes these from the population, only noisy parents may survive. The probability of the offspring being noisy is~$p$.
Conditional on this event, each of the $\mu_t$ search points in the population survives with probability~$p$, independently from one another. Then the number of survivors is given by a binomial distribution $\mathrm{Bin}(\mu_t, p)$. We have
\begin{align*}
    & \prob{\mathrm{Bin}(\mu_t, p) \le \lceil p\alpha(n) \rceil}
    \ge \prob{\mathrm{Bin}(\mu_t, p) \le \lceil p\mu_t \rceil} \ge 1/2
\end{align*}
since the median of the binomial distribution is at most $\lceil p\mu_t \rceil$.

Thus, a shrinking step occurs with probability at least $p/2$. If $\mu_t \le \lceil p\alpha(n)\rceil+1$, the population can only grow to $\alpha(n)$ if there is a sequence of $\alpha(n) - (\lceil p\alpha(n)\rceil+1) = \alpha(n) + \lfloor -p \alpha(n) \rfloor -1= \lfloor (1-p)\alpha(n) \rfloor -1$ steps that are all not shrinking. The probability of such a sequence is at most $(1-p/2)^{\lfloor (1-p)\alpha(n) \rfloor -1}$. If a shrinking step occurs, the population size drops to at most $\lceil p\alpha(n)\rceil+1$ and we can re-iterate the argument. Taking a union bound over the first $t(n)$ steps proves the first claim. Noticing that each attempt to reach a population size of at least $\alpha(n)$ requires at least one evaluation, the expected number of evaluations is bounded by the expectation of a geometric random variable with parameter $(1-p/2)^{\lfloor (1-p)\alpha(n) \rfloor-1}$, which is $(1-p/2)^{-\lfloor (1-p)\alpha(n) \rfloor+1}$.
\end{proof}
Processes where the current state shows a multiplicative expected decrease, plus some additive term, were recently analysed in~\cite{DoerrNegativeMultiplicativeDrift}. The expected change of the state was described as \emph{negative multiplicative drift with an additive disturbance}. Lower bounds are given on the expected time to reach some target value~$M$. However, these bounds are linear in~$M$, whereas the bounds from Theorem~\ref{the:negative-result-GSEMO} are exponential in the target $\alpha(n)$.

Theorem~\ref{the:negative-result-GSEMO} shows that, if $\alpha(n)$ is chosen as the size of the smallest Pareto set, it takes GSEMO exponential expected time in $\alpha(n)$ to reach a population that covers the whole Pareto front.
\begin{theorem}
Consider GSEMO on an arbitrary fitness function $f$ for which every Pareto set has size at least $\alpha(n)$.
Then, for every constant~$p \in (0, 1)$, in the $(f_{\max} - f_{\min}+1, p)$-Bernoulli noise model, the expected time for GSEMO to
cover the whole Pareto front is $2^{\Omega(\alpha(n))}$.
\end{theorem}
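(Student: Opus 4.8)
The plan is to derive this statement as a direct corollary of Theorem~\ref{the:negative-result-GSEMO}. First I would check that the hypotheses of that theorem are met under the stated noise model: the chosen noise strength $\delta = f_{\max}-f_{\min}+1$ satisfies $\delta > f_{\max}-f_{\min}$, and $p \in (0,1)$ is constant, so the theorem's lower bound on the expected number of generations needed to grow the population to size $\alpha(n)$ is available.

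The key conceptual step is to connect ``covering the whole Pareto front'' with ``reaching population size $\alpha(n)$''. Throughout a run the GSEMO population is an antichain of mutually non-dominated search points, since every accepted offspring causes all members it weakly dominates to be removed. Hence if, at some generation, the population covers the entire Pareto front, it must contain at least one search point for every Pareto-optimal fitness value, i.e. it contains a Pareto set. By assumption every Pareto set has size at least $\alpha(n)$, so the population size must have reached at least $\alpha(n)$ by that generation. Consequently, the first generation at which the Pareto front is covered is no earlier than the first generation at which the population size attains $\alpha(n)$, and a lower bound on the latter is a lower bound on the former.

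I would then invoke the quantitative bound. GSEMO initialises with a single search point, and since $p>0$ and $\alpha(n)\ge 1$ we have $1 \le \lceil p\alpha(n)\rceil$, so the premise of Theorem~\ref{the:negative-result-GSEMO} about starting from a population of size at most $\lceil p\alpha(n)\rceil$ is satisfied. It follows that the expected number of generations before the population reaches size $\alpha(n)$ — and therefore before the Pareto front can be covered — is at least $(1-p/2)^{-\lfloor (1-p)\alpha(n)\rfloor+1}$. Because each generation performs at least one fitness evaluation, the same lower bound carries over to the expected runtime measured in function evaluations.

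It remains to simplify the asymptotics. As $p$ is constant, $1-p/2$ is a constant in $(1/2,1)$, so $c \coloneqq \log_2\bigl(1/(1-p/2)\bigr)$ is a positive constant, and $1-p$ being a positive constant gives $\lfloor (1-p)\alpha(n)\rfloor = \Omega(\alpha(n))$. Hence $(1-p/2)^{-\lfloor (1-p)\alpha(n)\rfloor+1} = 2^{c\lfloor (1-p)\alpha(n)\rfloor - c} = 2^{\Omega(\alpha(n))}$, which is the claim. The proof is essentially a bookkeeping reduction to the earlier result; the only point that genuinely needs care is the antichain observation forcing the population to grow to size $\alpha(n)$ before any cover is possible, together with verifying that the generation-count bound transfers to the evaluation-based runtime measure.
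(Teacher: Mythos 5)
Your proposal is correct and takes essentially the same route as the paper, which states this theorem without a separate proof precisely because it is the reduction to Theorem~\ref{the:negative-result-GSEMO} that you spell out (initial population of size $1\le\lceil p\alpha(n)\rceil$, covering the front forces $|P_t|\ge\alpha(n)$, and the bound $(1-p/2)^{-\lfloor(1-p)\alpha(n)\rfloor+1}=2^{\Omega(\alpha(n))}$ for constant~$p$). One small remark: the antichain observation is both unnecessary and shaky in the noisy setting (removals are based on noisy fitness, so the population need not be an antichain with respect to the true~$f$); what you actually use is just that covering the Pareto front means, by definition, that the population contains a preimage of every Pareto-optimal fitness vector and hence a Pareto set of size at least~$\alpha(n)$.
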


Since all Pareto sets of well-known multiobjective test functions have size at least~$n+1$, we get the following.
\begin{corollary}\label{cor:negative-result-GSEMO}
%
For every constant $p \in (0, 1)$, in the $(n+1, p)$-Bernoulli noise model the expected time for GSEMO on
\LOTZfull or \OMMfull
to  cover the whole Pareto set is $2^{\Omega(n)}$.
\end{corollary}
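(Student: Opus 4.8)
The plan is to obtain the corollary by specialising the preceding (unnumbered) theorem to the two concrete functions, with $\alpha(n) = n+1$ in each case. First I would recall the well-known structure of the two Pareto fronts. For $\LOTZ(x) = (\LO(x), \TZ(x))$ the Pareto-optimal search points are exactly the strings $1^i 0^{n-i}$ for $i \in [n]_0$, which realise the $n+1$ distinct fitness values $(i, n-i)$; for $\OMM(x) = (\ones{x}, \zeroes{x})$ every string is Pareto-optimal and the attainable fitness vectors are the $n+1$ distinct pairs $(k, n-k)$ for $k \in [n]_0$. In both cases the Pareto front consists of exactly $n+1$ distinct fitness values, and since each search point contributes only a single fitness value, any set of search points covering the front must contain at least one representative per value. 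Hence every Pareto set has size at least $n+1$, which is precisely the hypothesis of the theorem with $\alpha(n) = n+1$.

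Second, I would verify that the noise strength stipulated in the corollary coincides with the one required by the theorem. Both objectives of $\LOTZ$ and of $\OMM$ take values in $\{0, 1, \dots, n\}$, so $f_{\min} = 0$ and $f_{\max} = n$, giving $f_{\max} - f_{\min} = n$ and therefore $f_{\max} - f_{\min} + 1 = n+1$. Thus the $(n+1, p)$-Bernoulli noise model of the corollary is exactly the $(f_{\max} - f_{\min}+1, p)$ model of the theorem, and in particular the noise-strength condition $\delta > f_{\max} - f_{\min}$ is satisfied. With both hypotheses in place, the theorem yields an expected covering time of $2^{\Omega(\alpha(n))} = 2^{\Omega(n+1)} = 2^{\Omega(n)}$, since a constant additive shift in the exponent is absorbed by the $\Omega(\cdot)$ notation; this is exactly the claimed bound.

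The proof is a direct instantiation, so I do not expect a genuine obstacle; the only place demanding care is the parameter matching, namely computing $f_{\max} - f_{\min}$ correctly so that the noise level $n+1$ lines up with $f_{\max} - f_{\min}+1$, and confirming that the Pareto front really contributes $n+1$ distinct values rather than, say, being conflated with the (much larger) set of Pareto-optimal search points in the $\OMM$ case. Once these two elementary facts are pinned down, the corollary follows immediately from the general negative result.
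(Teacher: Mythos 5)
Your proposal is correct and matches the paper's own (one-line) justification exactly: the corollary is a direct instantiation of the preceding theorem with $\alpha(n)=n+1$, using the facts that the Pareto fronts of \LOTZ and \OMM each comprise $n+1$ distinct fitness vectors and that $f_{\max}-f_{\min}+1=n+1$ for both functions. The extra care you take in distinguishing the Pareto front from the (larger) set of Pareto-optimal points for \OMM is welcome but not a departure from the paper's argument.
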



We are confident that the arguments in this section extend to other posterior noise models, e.\,g.\ adding Gaussian posterior noise. If the noise strength is not too small, there is a good chance that the offspring might be sampled with large positive noise, and then population members with a negative noise contribution may be dominated and be removed as argued above.

Note, however, that to achieve domination, the offspring must be at least as good as a population member in all objectives. In our Bernoulli noise model, we add the same noise of~$\delta$ to all objectives. If noise is determined independently for each objective and a value of~$\delta$ is added with probability~$p$, the offspring is guaranteed to dominate every noise-free population member if noise is applied in all dimensions. For $d$ dimensions, the probability of this event is $p^d$. If $d$ and $p$ are both constant, this is only a constant-factor difference, and thus if adding noise uniformly to all objectives yields an exponential lower bound from Theorem~\ref{the:negative-result-GSEMO}, the same holds when adding noise independently for each objective.

%

\section{NSGA-II is Robust to Noise if $p < \frac{1}{2}$}\label{sec:analysis-nsga-ii}


%

For \nsga with the Bernoulli noise model, when
$\delta$ is sufficiently large,
noisy search points do not interfere with
the dynamics of non-dominated layers containing noise-free points,
\ie in the calculation of the crowding distances. This is captured by
the following concepts, which are illustrated in Figure~\ref{fig:cd-sep-multiset}.

\begin{definition}\label{def:cd-sep-multiset}
Let $C \in \Z, D \in \N_0$ be some integers,
and let
$f(x):=(f_1(x),f_2(x))$ be a
discrete
bi-objective function.

%
\begin{itemize}
    \item A point $x\in \{0,1\}^n$ is called a \emph{\ptscd{C}{D}-point} if
    $f_1(x)=C+\ell \wedge f_2(x)=C+m$ for some $\ell, m \in [D]_0$.
    \item A point $x\in \{0,1\}^n$ is \emph{\ptscdsup{C}{D}} if it dominates all
    \ptscd{C}{D}-points, \ie $f_1(x)> C+D \wedge f_2(x)>C+D$.
    \item A multi-set $P$ of points of $f$ is called \emph{\ptscdsep{C}{D}} if
    it only contains \ptscd{C}{D}-points and \ptscdsup{C}{D} points.
    %
\end{itemize}
\end{definition}

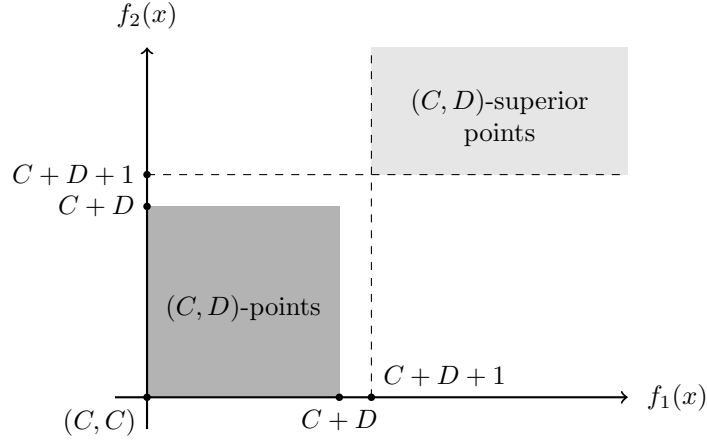
\begin{figure}[h]\centering
\begin{tikzpicture}[x=1.2em,y=1.2em]
    \tikzstyle{vertex}=[draw,circle,thick,minimum size=1em,scale=0.2,fill=black,align=center];
    \tikzstyle{axis}=[draw,black,thick,->];
    \tikzstyle{arrow}=[draw,black,thick,-stealth];
    \tikzstyle{edge}=[draw,black,thick];

    \fill[color=black!30!white] (0,0) rectangle (6,6);
    \fill[color=black!10!white] (7,7) rectangle (15,11);

    \draw[axis] (0,-1)  -> (0,11);
    \node[label=90:{$f_2(x)$}] at (0,11) {};
    \draw[axis] (-1,0)  -> (15,0);
    \node[label=0:{$f_1(x)$}] at (15,0) {};

    \node[vertex,label=-120:{$(C,C)$}] at (0,0) {};
    \node[vertex,label=-90:{$C+D$}]  at (6,0) {};
    \node[vertex,label=30:{$C+D+1$}]  at (7,0) {};
    \draw[dashed] (0,7) -- (15,7);
    \node[vertex,label=-180:{$C+D$}]  at (0,6) {};
    \node[vertex,label=-180:{$C+D+1$}] at (0,7) {};
    \draw[dashed] (7,0) -- (7,11);

    \node[label={\ptscd{C}{D}-points}] at (3,1.8) {};    
    \node[label={[align=center]\ptscdsup{C}{D}\\points}] at (11,7.3) {}; 
\end{tikzpicture}
\caption{Illustration of \ptscdsep{C}{D} multi-sets. Only the two shaded areas contain search points.}
\ifarxiv\else\Description{Illustration of \ptscdsep{C}{D} multi-sets.}\fi
\label{fig:cd-sep-multiset}
\end{figure}

To show that \nsga can optimise a function and cover a Pareto front, one has to prove that
the progress made so far by the optimisation process is maintained and
that Pareto optimal solutions are not being lost in future
generations \cite{Dang2023,DoerrQu22,ZhengLuiDoerrAAAI22}. Such arguments were first used by~\cite{ZhengLuiDoerrAAAI22} and later on in~\cite{Bian2022PPSN,DoerrQu22,Dang2023}. In
\cite{Dang2023} these arguments were extracted and summarised in a lemma~\cite[Lemma~7]{Dang2023}.
We adapt the lemma to our case 
as follows.

\begin{lemma}\label{lem:nsga-ii-protect-layer}
Consider two
consecutive
generations $t$ and $t+1$
of the \nsga
maximising a bi-objective function $f(x):=(f_1(x),f_2(x))$
where $f_i(x)\colon \{0,1\}^n \rightarrow \Z$ for each $i\in\{1,2\}$,
and two numbers $C \in \Z, D \in \N_0$
such that $R_t$ (\ie the joint parent and offspring population)
is \ptscdsep{C}{D}.
Then we have:
\begin{itemize}
    \item[(i)] Any layer $F_t^i$
    composed of
    only \ptscd{C}{D}-points has at most
    ${4(D+1)}$ individuals with positive crowding distances.
    The same result holds for layers $F_{t+1}^i$
    that only have
    \ptscd{C}{D}-points.
    \item[(ii)] If $R_t$ has at most \numsuppts \ptscdsup{C}{D} points for
    some $\numsuppts\in \N_0$ and the population size $\mu$ satisfies
    $\mu \geq 4(D+1) + \numsuppts$, then the following result holds.
    If there is a \ptscd{C}{D}-point $x \in P_t$, then there must exist
    a \ptscd{C}{D}-point $y \in P_{t+1}$ with either $f(y)=f(x)$
    or $y \succ x$.
\end{itemize}
\end{lemma}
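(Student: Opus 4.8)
The plan is to prove the two parts separately: part~(i) is a direct counting argument on the crowding distance, and part~(ii) exploits the layer structure of $R_t$ forced by \ptscdsep{C}{D}-separability. For part~(i), I would fix a layer $F$ consisting only of \ptscd{C}{D}-points and use the single structural fact that in each objective its members take at most $D+1$ distinct values, since $f_k\in\{C,\dots,C+D\}$ on \ptscd{C}{D}-points. A member gets positive total crowding distance only if it is extremal in the sorted order of some objective (receiving an $\infty$-contribution) or if it is an interior point whose objective-$k$ numerator $f_k(x_{k_{i-1}})-f_k(x_{k_{i+1}})$ is strictly positive. There are at most $4$ extremal points, at most two per objective. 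For the interior contributions in a fixed objective~$k$, I would note that the descending sort produces at most $D$ strict descents (one fewer than the number of distinct values), and each descent can make at most two interior points have a strictly positive numerator; this gives at most $2D$ interior points per objective. Summing yields $4+2D+2D=4(D+1)$. Since this argument never uses separability, it applies verbatim to layers $F_t^i$ and $F_{t+1}^i$.

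For part~(ii), the crux is the layer structure. Because every \ptscdsup{C}{D} point dominates every \ptscd{C}{D}-point while no \ptscd{C}{D}-point can dominate a superior point, the superior points form a sub-poset whose non-dominated sorting occupies some initial layers, say $F^1_{t+1},\dots,F^{L}_{t+1}$ with $L\le s\le\numsuppts$ where $s$ is the number of superior points; consequently every \ptscd{C}{D}-point lands in a layer of index at least $L+1$. I would then argue that the \emph{maximal} \ptscd{C}{D}-points (those dominated by no other \ptscd{C}{D}-point) all sit in exactly the layer $j^*:=L+1$ and that this layer contains \emph{only} such points, so part~(i) bounds its number of positive-crowding-distance members by $4(D+1)$. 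Given the assumed \ptscd{C}{D}-point $x\in P_t$, I would pick a maximal \ptscd{C}{D}-point $x'$ with $x'\succeq x$; it exists because $x$ itself is a \ptscd{C}{D}-point and the domination order is finite, and any $w\succ x'$ would contradict maximality, so $x'$ genuinely lies in layer $j^*$. Then $f(x')=f(x)$ or $x'\succ x$, so it suffices to show that the fitness value $f(x')$ survives into $P_{t+1}$.

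Survival follows by budgeting: the selection keeps all of layers $F^1_{t+1},\dots,F^{L}_{t+1}$, at most $\numsuppts$ points in total, and then fills the remaining at least $\mu-\numsuppts\ge 4(D+1)$ slots from layer $j^*$ in order of decreasing crowding distance; since $j^*$ has at most $4(D+1)$ points of positive crowding distance, every one of them is retained. The remaining ingredient, which I expect to be the main obstacle, is to guarantee that the \emph{specific} value $f(x')$ (not merely the extreme values) is carried by a positive-crowding-distance point. For this I would show that every distinct fitness value occurring in layer $j^*$ has such a representative: layer $j^*$ is an antichain, so equal $f_1$ forces equal fitness and the value classes appear as contiguous blocks in the $f_1$-sorted order; the two end values get $\infty$, while for any intermediate value the rightmost point of its block has a strictly larger $f_1$ to its left and a strictly smaller $f_1$ to its right, hence a strictly positive objective-$1$ contribution. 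Therefore $f(x')$ is carried by a surviving \ptscd{C}{D}-point $y\in P_{t+1}$ with $f(y)=f(x')$, and since $x'\succeq x$ we get $f(y)\ge f(x)$ componentwise, i.e.\ $y\succeq x$, which is exactly the claim $f(y)=f(x)$ or $y\succ x$.
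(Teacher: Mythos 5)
Your proof is correct and follows essentially the same route as the paper: pure layering with the superior layers preceding the \ptscd{C}{D}-layers, the $4(D+1)$ bound on positive-crowding-distance points, the observation that every fitness vector in the first \ptscd{C}{D}-layer has a representative with positive crowding distance, and the budget argument from $\mu \geq 4(D+1)+\numsuppts$. The only differences are cosmetic: you spell out the counting for part~(i) and the representative argument, which the paper delegates to Lemma~7 of the cited prior work, and you merge the paper's two cases by passing directly to a maximal dominating \ptscd{C}{D}-point.
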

\begin{proof}
The layers of $R_t$ (the union of parents and offspring) are separated between those
containing \ptscd{C}{D}-points and those having the superior ones.
The layers of \ptscdsup{C}{D} points are higher ranked (have a lower index) than \ptscd{C}{D}-points.

    (i)
    It suffices to prove the result for $F_{t+1}^i$
    with only \ptscd{C}{D}-points of $R_t$ then it also holds for the
    same type of layers in $P_t, P_{t+1}$ because these populations
    are sub-multi-sets of $R_t$.
    The remaining proof arguments for (i) are identical to those in
    the proof of result (i) of Lemma~7 in~\cite{Dang2023},
    with their $F_{t}^1$ being replaced by our $F_{t+1}^i$.
    Thus we omit it here and refer to~\cite[Lemma~7]{Dang2023} for details.
    %
    %

    We note one insight from the proof for later use.
    The proof shows that for each fitness vector $(a,b)$ of the layer
    there is at least a point with a positive crowding distance.

    (ii)
    Let $i^*$ be the smallest integer such that the layer
    $F^{i^*}_{t+1}$ of $R_t$ contains only \ptscd{C}{D}-points,
    \ie the layers $F^{j}_{t+1}$ with $j< i^*$ only contain \ptscdsup{C}{D}-points.
    The condition on $R_t$ means that $\sum_{j<i^*} |F_{t+1}^j|\leq \numsuppts$,
    thus it follows from (i) and $\mu\geq 4(D+1) + \numsuppts$ that
    $P_{t+1}$ will contain all search points from $F_{t+1}^{i^*}$ with
    positive crowding distance, in addition to the \ptscdsup{C}{D} points of $R_t$.

    We have the following cases for the \ptscd{C}{D}-point $x$:

    \underline{Case 1}: If none of the \ptscd{C}{D}-points of $R_t$ dominates $x$,
    then clearly $x \in F_{t+1}^{i^*}$.
    As remarked at the end of the proof of (i), there must exist
    a $y \in F_{t+1}^{i^*}$ with a positive crowding distance and with $f(y)=f(x)$.
    Thus $y$ will be kept in $P_{t+1}$.

    \underline{Case 2}: If some of the \ptscd{C}{D}-points of $R_t$ dominate $x$,
    then let $y$ be such a point. We may assume that $y$ is not dominated
    by any other point of $R_t$ because if there is a $y' \in R_t$ dominating $y$, we can choose $y'$ instead of~$y$ and iterate this argument until a non-dominated point is found.
    This implies that
    $y \in F_{t+1}^{i^*}$ and as in the previous case, there exists
    a $y' \in F_{t+1}^{i^*}$ with a positive crowding distance and with $f(y')=f(y)$.
    Thus $y'$ will be kept in $P_{t+1}$ and we have $y'\succ x$.
\end{proof}

Now we use Lemma~\ref{lem:nsga-ii-protect-layer} to show that \nsga can find the Pareto front of \LOTZ efficiently when the noise probability is at most a constant less than $1/2$.
%
%
Roughly speaking, the result shows that with a sufficiently large population,
a sub-population of \nsga can still evolve its noise-free search points,
thus noise has a minimal effect on the optimisation process.
For example, $\mu \ge 9(n+1)$ meets the condition for all noisy LOTZ
functions with $p \le 1/4 \wedge \delta>n$ or $p \ge 3/4 \wedge \delta<-n$.
We will use this setting later in our experiments.

\begin{theorem}\label{thm:nsga-ii-noisy-lotz}
Consider the $(\delta, p)$-model with noise strength $\delta > n$
    and constant noise probability
        $p\in \left(0,\frac{1}{2(1+c)}\right)$
    for some constant~${c > 0}$.
Then \nsga with population size $\mu\geq \frac{4(n+1)}{1-2(1+c)p}$
and $p_c \le 1 - 2^{-o(n)}$ finds and covers the whole Pareto front of noisy \LOTZ in
    $\mathcal{O}\left(n^2/(1-p_c)\right)$ expected generations and
    $\mathcal{O}\left(\mu n^2/(1-p_c)\right)$ expected fitness evaluations.
The
result also holds for the $(-\delta, 1-p)$-model using the
same conditions. 
\end{theorem}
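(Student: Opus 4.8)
The plan is to apply Lemma~\ref{lem:nsga-ii-protect-layer} with $C=0$ and $D=n$. By the stated symmetry it suffices to treat $\delta>n$ and $p<\frac{1}{2(1+c)}$. Since $\delta>n$ and the true objective values of \LOTZ lie in $[n]_0$, every noise-free point is a \ptscd{0}{n}-point and every noisy point is \ptscdsup{0}{n}; hence $R_t$ is automatically \ptscdsep{0}{n} in every generation, so the only remaining hypothesis needed to invoke part~(ii) is a bound~$\numsuppts$ on the number of noisy points.

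To establish that bound I would observe that each of the $2\mu$ members of $R_t=P_t\cup Q_t$ is independently noisy with probability~$p$, so their number is $\Bin(2\mu,p)$. Taking $\numsuppts:=\lceil 2(1+c)\mu p\rceil$, a Chernoff bound gives $\prob{\Bin(2\mu,p)>\numsuppts}=e^{-\Omega(\mu p)}=e^{-\Omega(n)}$, using that $\mu=\Omega(n)$ and $p$ is constant. The hypothesis $\mu\ge\frac{4(n+1)}{1-2(1+c)p}$ is, up to rounding, exactly $\mu\ge 4(n+1)+\numsuppts$, so on the \emph{protection event} $\{\Bin(2\mu,p)\le\numsuppts\}$ part~(ii) applies with $D=n$. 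A union bound makes this event hold throughout any $\poly(n)$-length run with probability $1-e^{-\Omega(n)}$, and the rare failure will contribute only an $e^{-\Omega(n)}$ term to the expectation via a restart argument.

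Under protection no progress is lost: part~(ii) keeps every noise-free Pareto-optimal value present in $P_t$ represented (by an equal-or-dominating noise-free point) in $P_{t+1}$, while every noisy point survives as well, because every noisy point dominates every noise-free point and hence precedes it in the sorting, and $\numsuppts\le\mu-4(n+1)<\mu$ leaves room for all of them. Thus each true Pareto value $(i,n-i)$, once attained, persists irrespective of an individual's fluctuating noise status. A newly created noise-free Pareto-optimal offspring lies in the first noise-free layer $F^{i^*}$, where by the crowding-distance insight recorded in the proof of part~(i) its value retains a representative of positive crowding distance and is kept in $P_{t+1}$. Consequently the noise-free sub-population evolves essentially as noiseless \nsga on \LOTZ, up to the selection effects discussed next.

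It then remains to adapt the noiseless $O(\mu n^2)$ analysis of \LOTZ. Restricting to crossover-free steps (probability $1-p_c$), a single correct bit flip either increases $\LO+\TZ$ off the front or moves a front point $1^i0^{n-i}$ to an adjacent one; there are $O(n)$ such milestones (reaching the front, then spreading in both directions). Each requires selecting a useful noise-free parent, flipping one specific bit ($\Theta(1/n)$), and the offspring being noise-free ($1-p$); summing over the $\mu$ offspring yields per-generation success probability $\Omega((1-p_c)/n)$, hence $O(n^2/(1-p_c))$ generations and $O(\mu n^2/(1-p_c))$ evaluations. I expect the main obstacle to be the selection step: a noisy individual wins every binary tournament it enters, so I must lower-bound by $\Omega(1/\mu)$ the probability of selecting the desired noise-free front point despite up to a $\approx 2(1+c)p<1$ fraction of the population being noisy, and simultaneously confirm that the progress-making offspring survives the crowding-distance truncation. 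Quantifying this selection probability uniformly across all milestones — and matching its breakdown exactly at $p=1/2$, which the constant~$c$ encodes — is the delicate part, whereas the domination and survival bookkeeping is routine once Lemma~\ref{lem:nsga-ii-protect-layer} is in place.
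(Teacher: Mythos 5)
Your proposal follows essentially the same route as the paper's proof: Lemma~\ref{lem:nsga-ii-protect-layer} with $C=0$, $D=n$ and $\numsuppts=2(1+c)p\mu$, a Chernoff-plus-restart (typical-run) argument for the rare generations with too many noisy evaluations, and the standard $O(n)$-milestone analysis of \LOTZ yielding $O(n^2/(1-p_c))$ generations. The one step you defer --- the $\Omega(1/\mu)$ tournament-selection bound --- is closed in the paper exactly via the slack you identify: at most $(1+c)p\mu$ parents are noisy and at most $4(n+1)\le(1-2(1+c)p)\mu$ noise-free parents have positive crowding distance, so the second competitor is noise-free with zero crowding distance (hence loses to the chosen point $y$) with constant probability.
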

\begin{proof}

For $\delta>n$, the noise model
guarantees that all noisy solutions dominate all noise-free solutions,
thus the populations $P_t, Q_t$ and $R_t$ are typically \ptscdsep{0}{n}
with the superior points being the noisy ones.
%
Lemma~\ref{lem:nsga-ii-protect-layer}~(i) with $C=0, D=n$ then implies
that there are no more than $4(n+1)$
individuals
with positive crowding distances in every layer $F_t^i$, or $F_{t+1}^i$ of noise-free individuals.

Furthermore, in each generation of the algorithm, the expected number of parents
in $P_t$ that are noise-free after re-evaluation is $(1-p)\mu$, thus by an additive
Chernoff bound~\cite[Theorem~1.10.7]{doerr-theory-chapter}, the probability of having at least $(1-(1+c)p)\mu$ noise-free parents
and conversely at most $(1+c)p\mu$ noisy ones is
at least
$1-e^{-2c^2p^2\mu}=
1 - e^{-\Omega(n)}$. Similarly, during the evaluation of the offspring $Q_t$,
with probability $1-e^{-\Omega(n)}$, there are at least $(1-(1+c)p)\mu$
noise-free solutions and at most $(1+c)p\mu$ noisy ones. If either one
of these two conditions does not occur during a generation, we refer to this as
a \emph{bad event}. The probability of a bad event is at most $2e^{-\Omega(n)}
=e^{-\Omega(n)}$.
%
Given the rarity of bad events, we apply the typical run method
with the restart argument, see Chapter~5.6 of \cite{Jansen2013}.
%
We therefore divide the run of the algorithm into
two phases, each phase is associated with a goal.
The phases last for at most $T_1$ and
$T_2$ generations respectively,
and have the failure probability
of at most $p_1$ and $p_2$ respectively.
A failure means that a bad event happens during the random length of a phase.
As that the following analysis works for any initial population, such
a failure is no worse than restarting the analysis on the resulting population.
Under this assumption, the expected number of generations until the Pareto
front is covered is at most $(\E{T_1}+\E{T_2})/(1-p_1-p_2)$.


\underline{Phase~1}: Create a first Pareto optimal solution.

Define
$i:=\max\{\LO(x)+\TZ(x) \mid x \in P_t\}$ and note that all Pareto optimal solutions
$x$ have $\LO(x) + \TZ(x) = n$. Thus, the phase is completed once $i=n$. 
%
We now
consider a point $y\in P_t$ so that
$\LO(y)+\TZ(y)=i$ and $y$ has positive crowding distance,
to give a lower bound
for the probability of increasing~$i$
in a generation.

During
a binary
tournament, the probability of selecting $y$ as the first competitor
is $1/\mu$.
%
The probability of the other competitor being a noise-free solution with zero crowding
distance is bounded from below as follows. There are at least $(1-(1+c)p)\mu$ noise-free parents in $P_t$, and each noise-free layer has at most $4(n+1)$ individuals with positive crowding distances.
Thus, the sought probability is at least
$(1-(1+c)p)\left(1-\frac{4(n+1)}{\mu}\right)
    \geq (1-(1+c)p)\left(1-\frac{4}{4/(1-2(1+c)p)}\right)
    = \Omega(1)$.
So, even when
$y$
is noise-free
the probability of it winning the tournament is
at least $2\cdot \Omega(1)\cdot (1/\mu) = \Omega(1/\mu)$ where the factor $2$
accounts for
the exchangeable roles of the competitors.

To create an offspring $z$ that dominates $y$ 
with $\LO(z)+\TZ(z) \geq i + 1$ 
it suffices to select $y$ as a parent, to skip crossover and to
flip one specific bit of $y$ during mutation, while keeping the other bits
unchanged (this mutation has probability $1/n \cdot (1-1/n)^{n-1} \ge 1/(en)$).
These events occur with probability
$
 s_i
    := (1-p_c)(1/en) \cdot \Omega(1/\mu)
     = \Omega((1-p_c)/(\mu n))$. During $\mu/2$ offspring productions, the probability of creating such a solution
$z$ is
$1 - (1-s_i)^{\mu/2}
    \geq \frac{s_i\mu/2}{s_i \mu/2 + 1}
    = \frac{s_i \mu}{s_i \mu + 2}$,
    where the inequality follows from~\cite[Lemma~6]{Badkobeh2015}.

During survival selection, since we have at most $(1+c)p\mu$ noisy solutions in $P_t$ and $Q_t$, respectively,
there are at most $2(1+c)p\mu$ noisy
solutions in $R_t = P_t \cup Q_t$.
As
the population size $\mu$
satisfies $\mu \geq 2(1+c)p\mu + 4(n+1)$, 
Lemma~\ref{lem:nsga-ii-protect-layer}~(ii)
    with $C=0, D=n, \numsuppts=2(1+c)p\mu$
first implies that
even when no such $z$ individual is created
an individual
with the same fitness as
$y$ always survives
to the next generation
$P_{t+1}$;
in other words,
$i$
cannot decrease.
Second, when one individual 
$z$ is created, regardless of whether it is evaluated with noise or not,
$z$, or an individual weakly dominating it, survives.

So the expected number of generations of this phase
is at most:
\begin{align*}
\E{T_1} \le \sum_{i=0}^{n-1} \left(1 + \frac{2}{s_i\mu}\right) 
    = n + \frac{2}{\mu}\sum_{i=0}^{n-1} \mathcal{O}\left(\frac{\mu n}{1-p_c}\right)
    = \mathcal{O}\left(\frac{n^2}{1-p_c}\right).
\end{align*}
The failure probability of the phase
is bounded from
above by the law of total probability and union bounds as
$p_1 \leq  \sum_{t=1}^{\infty} \prob{T_1=t} \cdot t e^{-\Omega(n)}
    = e^{-\Omega(n)} \E{T_1}
    = o(1)$
since $1 - p_c=2^{-o(n)}$.

\underline{Phase~2}: Cover the whole Pareto front.

Now that the population $P_t$ contains Pareto-optimal individuals,
while the whole Pareto front has not been covered, the population contains a Pareto-optimal
individual $y$ with positive crowding distance such that one of the fitness vectors
$(\LO(y)-1, \TZ(y)+1)$ or $(\LO(y)+1, \TZ(y)-1)$ is not yet present in~$f(P_t)$.

As argued for Phase~1, the probability that
during one offspring production, the sequence of operations selection, crossover,
and mutation produces an offspring $z$ with a missing fitness vector
is at least
$
 s_i' \coloneqq \Omega((1-p_c)/(\mu n))$.
Again with $\mu/2$ trials, the probability of creating $z$ per generation
is
$1 - (1-s_i')^{\mu/2}\geq s_i'\mu/(s_i'\mu + 2)$.

During the survival selection, we again use
Lemma~\ref{lem:nsga-ii-protect-layer}~(ii)
with the same parameters
to argue that
Pareto optimal fitness vectors will never be removed entirely from the population.
%
There can be at most $n$ missing Pareto optimal fitness vectors to cover, thus the
expected number of generations to complete this phase is at most
\begin{align*}
\E{T_2} 
    \leq \sum_{i=1}^{n} \left(1 + \frac{2}{s_i'\mu}\right)
    = \sum_{i=1}^{n} \left(1 + \frac{2}{\mu}\cdot \mathcal{O}\left(\frac{\mu n}{1-p_c}\right)\right)
    = \mathcal{O}\left(\frac{n^2}{1-p_c}\right).
\end{align*}
and by a similar argument as in the other phase,
we also have $p_2 = o(1)$ given $1-p_c=2^{-o(n)}$.

The total expected number of generations until the Pareto front is covered is
bounded from above by
$(\E{T_1} + \E{T_2}) / (1 - p_1 - p_2)
    = {\mathcal{O}\left(n^2/(1-p_c)\right)/(1-o(1))}
    = \mathcal{O}\left(n^2/(1-p_c)\right)
$.
%
The bound on the expected
number of evaluations follows from the fact that the number of solutions evaluated
in each generation is $2\mu = O(\mu)$.
\end{proof}

Our analysis can be easily extended to show similar results for other functions.
For instance, the expected time to cover the Pareto front of noisy \OMM in the
same noise model is at most $\mathcal{O}\left(\mu n\log{n}/(1-p_c)\right)$.
\ifarxiv
\begin{theorem}\label{thm:nsga-ii-noisy-omm}
Consider the $(\delta, p)$-model with noise strength $\delta > n$ and constant noise probability $p\in \left(0,\frac{1}{2(1+c)}\right)$ for some constant~${c > 0}$.
Then \nsga with population size $\mu\geq \frac{4(n+1)}{1-2(1+c)p}$
and $p_c \le 1 - 2^{-o(n)}$ covers the whole Pareto front of
the noisy \OMM function in
    $\mathcal{O}\left(\frac{n\log{n}}{1-p_c}\right)$ expected generations and
    $\mathcal{O}\left(\frac{\mu n\log{n}}{1-p_c}\right)$ expected fitness evaluations.
The
result also holds for the $(-\delta, 1-p)$-model using the
same conditions. 
\end{theorem}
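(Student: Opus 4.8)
The plan is to reuse the entire scaffolding of the proof of Theorem~\ref{thm:nsga-ii-noisy-lotz} and to replace only the progress argument, exploiting the two features that make \OMM special: first, every search point is already Pareto optimal, so there is no analogue of Phase~1 and it suffices to cover the front; second, a point with $\ones{x}=j$ ones can be turned into a point with $j-1$ or $j+1$ ones by flipping any one of its $j$ one-bits or any one of its $\zeroes{x}=n-j$ zero-bits, so the per-step success probability grows with the number of available bits. First I would observe that, since $f_1,f_2\in[0,n]$ and $\delta>n$, every noise-free \OMM point is a \ptscd{0}{n}-point and every noisy point is \ptscdsup{0}{n}, so the populations are \ptscdsep{0}{n} exactly as before. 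The Chernoff bad-event bound (at most $(1+c)p\mu$ noisy parents and at most $(1+c)p\mu$ noisy offspring with probability $1-e^{-\Omega(n)}$), the typical-run-with-restarts framework, and the condition $\mu\ge 4(n+1)/(1-2(1+c)p)$ are all inherited verbatim. Applying Lemma~\ref{lem:nsga-ii-protect-layer}~(ii) with $C=0$, $D=n$, $\numsuppts=2(1+c)p\mu$ then guarantees that a positive-crowding-distance representative of each covered fitness value survives to the next generation; since no noise-free \OMM point dominates another, this representative has exactly the same fitness, so covered fitness values are never lost.

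Next I would set up the coupon-collector / harmonic-sum argument that produces the $\log n$ factor in place of the second factor of $n$ seen for \LOTZ. Let $a$ and $b$ denote the smallest and largest number of ones currently present; these extreme points are the endpoints of the noise-free layer and hence carry infinite crowding distance, so they are always retained and win a tournament against a noise-free competitor. To cover value $a-1$ it suffices to select the $a$-ones point (probability $\Omega(1/\mu)$ to be chosen and win, using that the competitor is noise-free with probability $\Omega(1)$), skip crossover, and flip exactly one of its $a$ one-bits (probability $\ge a/(en)$). Over the $\mu/2$ offspring productions this yields a per-generation success probability of $\Omega((1-p_c)a/n)$, and symmetrically $\Omega((1-p_c)(n-b)/n)$ for covering $b+1$. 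Summing the resulting geometric waiting times gives $\sum_{a\ge 1}\mathcal{O}(n/((1-p_c)a))=\mathcal{O}(n\log n/(1-p_c))$ generations for the minimum to reach $0$ and the same for the maximum to reach $n$. Interior gaps, whose neighbours have $\Theta(n)$ flippable bits, are filled in $\mathcal{O}(1/(1-p_c))$ generations each and contribute only $\mathcal{O}(n/(1-p_c))$ in total, so the dominant cost is the two boundary journeys. Converting generations to evaluations through the factor $2\mu$ and bounding the failure probabilities by $o(1)$ exactly as before yields the claimed $\mathcal{O}(\mu n\log n/(1-p_c))$ bound; the statement for the $(-\delta,1-p)$-model follows from the swap-invariance noted earlier.

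The hard part will be the bookkeeping of the covering process rather than any single estimate. For \LOTZ each of the $n$ improvement steps had essentially the same probability, so the waiting times added up to $\mathcal{O}(n^2)$; here the probabilities vary with the distance to the extremes, and I must make sure that covering the values is charged in the right order. A value near an extreme can only be reached cheaply from its neighbour closer to the centre, so the expensive single-bit steps that produce $\ones{x}\in\{0,1\}$ and $\ones{x}\in\{n-1,n\}$ are unavoidable and are precisely what drives the harmonic sum. I also need to confirm that the per-fitness-value representatives guaranteed by Lemma~\ref{lem:nsga-ii-protect-layer}~(ii) are not only retained but selectable with probability $\Omega(1/\mu)$ even when they have zero crowding distance, which holds because with probability $\Omega(1)$ the tournament opponent is a noise-free point in the same or a worse layer. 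Everything else is a direct transcription of the \LOTZ argument.
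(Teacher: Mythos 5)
Your proposal is correct and follows essentially the same route as the paper's own proof: identical scaffolding (the \ptscdsep{0}{n} structure, the Chernoff bad-event bound, Lemma~\ref{lem:nsga-ii-protect-layer}, the restart argument) combined with a harmonic-sum progress argument in which the per-step success probability is proportional to the number of flippable bits. The only difference is cosmetic bookkeeping — you track the min/max number of ones and handle interior gaps separately, whereas the paper sums the sequential waiting times to cover $(i+1,n-i-1),\dots,(n,0)$ and its mirror from an arbitrary covered point — and both yield the same $\mathcal{O}(n\log n/(1-p_c))$ bound.
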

\begin{proof}
We follow the same approach as in the proof of Theorem~\ref{thm:nsga-ii-noisy-lotz},
by
using the typical run method with the restarting argument, and define the
bad events the same way. That is, a bad event occurs in generation $t$
    either if more than $(1+c)p\mu$ parents in $P_t$ are re-evaluated with noise,
    or if more than $(1+c)p\mu$ offspring individuals in $Q_t$ are evaluated with noise,
thus the probability of such an event is at most $2e^{-\Omega(n)}=e^{-\Omega(n)}$.
Since any search point is Pareto optimal for \OMM,
we only have a single phase of covering the Pareto front.
%
Like \LOTZ, the noisy \OMM function is also \ptscdsep{0}{n}, thus
the conditions to apply Lemma~\ref{lem:nsga-ii-protect-layer}~(i) are all fulfilled
given the same setting for $\mu$.

If the Pareto front is not covered entirely, then there must exist search points
$z \notin P_t$ next to a search point $y\in P_t$ with a positive crowding distance,
\ie $|\ones{z}-\ones{y}|=1 \wedge |\zeroes{z}-\zeroes{y}|=1$.
(These events are equivalent for 0-1 strings of equal length.)
Let $\ones{y}=i$ thus $\zeroes{y}=n-i$, and we will focus
on the case
    $\ones{z} = \ones{y}+1 = i+1 \wedge \zeroes{z} = \zeroes{y}-1 = n - i - 1$
as the reasoning is symmetric
in the other case of
    $\ones{z} = \ones{y}-1 \wedge \zeroes{z} = \zeroes{y}+1$.
Similar to the argument for \LOTZ,
with probability $\Omega(1/\mu)$, $y$ is selected as a parent and with probability
$1-p_c$ it creates an offspring by mutation only. To create such a point $z$ by mutation,
it suffices to flip one of the $n - i$ $0$-bits of $y$ to $1$ while keeping the
rest of the bits unchanged, and this happens with probability
$\frac{n-i}{n}\left(1-\frac{1}{n}\right)^{n-1}
    =\Omega({\frac{n-i}{n}})$.
So, the probability that a search point $z$ is created during one offspring production
is $s_i :=\Omega(\frac{(1-p_c)(n-i)}{\mu n})$. With $\mu/2$ offspring productions,
the chance of creating a solution $z$ is
$1 - (1-s_i)^{\mu/2}
    \geq \frac{s_i\mu/2}{s_i \mu/2 + 1}
    = \frac{s_i \mu}{s_i \mu + 2}$.

Once a point $z$ is created, then
by Lemma~\ref{lem:nsga-ii-protect-layer}~(ii), similarly to the case of \LOTZ,
a search point with
    fitness $f(z)$
is always kept in the population.
Thus, starting from $y$, the expected number of generations to cover
fitness vectors 
$(i+1,n-i-1),(i+2,n-i-2),\dots,(n,0)$, if they do not yet exist in the population,
is at most
\[
\sum_{k=i}^{n-1} \left(1 + \frac{2}{\mu s_i}\right)
    \leq \sum_{k=0}^{n-1} \left(1 + \mathcal{O}\left(\frac{2n}{(1-p_c)(n-i)}\right)\right)
    = \mathcal{O}\left(\frac{n\log{n}}{1-p_c}\right).
\]
By symmetry, the same bound holds to cover the fitness vectors 
$(i-1,n-i+1),(i-2,n-i+2),\dots,(0,n)$.
So, the expected number of generations to cover the whole front is no more than
    $\E{T} = \mathcal{O}\left(\frac{n\log{n}}{1-p_c}\right)$,
and the failure probability of the phase is at most
    $\sum_{t=1}^{\infty} \prob{T = t} \cdot t e^{-\Omega(n)}
     =e^{-\Omega(n)}\E{T}
     =o(1)$ given $1 - p_c = 2^{-o(n)}$.
%
Thus, the expected runtime of the algorithm is at most
    $\mathcal{O}\left(\frac{n\log{n}}{1-p_c}\right) \cdot \frac{1}{1-o(1)}
     = \mathcal{O}\left(\frac{n\log{n}}{1-p_c}\right)$
generations,
or, equivalently, at most $\mathcal{O}\left(\frac{\mu n\log{n}}{1-p_c}\right)$
fitness evaluations since only $2\mu$ evaluations are required per generation.
\end{proof}
\else
This result is omitted due to the space limit.
\fi
%

\section{Phase Transition for NSGA-II at $p=\frac{1}{2}$}

We have seen that, when $\delta>n$, with an appropriate scaling of the population size
\nsga
can handle any constant noise probability $p$ approaching $1/2$
from below.
Our result from Theorem~\ref{thm:nsga-ii-noisy-lotz} does not cover the case $p>1/2$
because 
when approaching $1/2$ from above,
the progress of optimisation has to rely on having a sufficient number of good individuals
that are evaluated with noise. The dynamic of the algorithm is therefore different,
in fact,
%
%
the next theorem shows that a noise probability around $p>1/2$ leads to poor
results on \LOTZ. For the sake of simplicity, we omit crossover and leave
an analysis including crossover for future work.
%

\begin{theorem}\label{thm:nsga-ii-noisy-lotz-p05}
Consider the \nsga with population size $\mu \in [n+1,\infty) \cap O(n)$
and crossover turned off ($p_c=0$) on the noisy \LOTZ function with the
$(\delta,p)$-noise model. If $\delta>n$ and $p$ is a constant such that
$1/2 < p < 10/19$ then \nsga requires $e^{\Omega(n)}$ generations with overwhelming probability to cover the whole Pareto front.
\end{theorem}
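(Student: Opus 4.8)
The plan is to exploit the fact that, once $p>1/2$ and $\delta>n$, the noisy individuals typically outnumber \emph{and} dominate the noise-free ones, so that survival selection is essentially forced to discard every noise-free individual each generation. First I would re-evaluate the combined pool $R_t=P_t\cup Q_t$ of size $2\mu$ and let $N_t\subseteq R_t$ be the (random) set of individuals that are noisy in generation~$t$; each member of $R_t$ lies in $N_t$ independently with probability~$p$, so $|N_t|\sim\Bin(2\mu,p)$ with mean $2p\mu>\mu$. Since $\delta>n\ge f_{\max}-f_{\min}$ on \LOTZ, every noisy point strictly dominates every noise-free point, so in the non-dominated sorting of $R_t$ all noise-free points occupy layers strictly below all noisy points. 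A Chernoff bound gives $\prob{|N_t|<\mu}=e^{-\Omega(n)}$, because $(2p-1)\mu=\Omega(n)$ for constant $p>1/2$ and $\mu\ge n+1$. Hence with overwhelming probability $|N_t|\ge\mu$, and the next population is rebuilt entirely from points that were noisy in generation~$t$, i.e. $P_{t+1}\subseteq N_t$.

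The crucial consequence is that coverage is never persistent: no matter how $P_t$ looks, $P_{t+1}$ is reconstructed from the noisy subset $N_t$, so the front can only be covered in a generation in which \emph{every} Pareto-optimal value is simultaneously carried by some noisy copy. I would make this the central necessary condition: if $P_{t+1}$ covers the front, then for each $i\in[n]_0$ some individual with true fitness $(\LO,\TZ)=(i,n-i)$ must lie in $N_t$ and hence must have been noisy. Writing $c_i$ for the number of copies of value $(i,n-i)$ in $R_t$, the $n+1$ events ``value~$i$ has a noisy copy'' depend on disjoint sets of noise draws and are therefore independent, each with probability $1-(1-p)^{c_i}$. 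Thus $\prob{P_{t+1}\text{ covers the front}}\le\prod_{i=0}^{n}\bigl(1-(1-p)^{c_i}\bigr)$, uniformly over all configurations of $R_t$.

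Next I would bound this product by a counting estimate. Since $\sum_{i=0}^{n}c_i\le|R_t|=2\mu=\mathcal{O}(n)$ while there are $n+1$ values to fill, a pigeonhole argument shows that for a suitable constant~$K$ at least $(n+1)/2$ of the values satisfy $c_i\le K$. For each such value $1-(1-p)^{c_i}\le 1-(1-p)^{K}$, a constant strictly below~$1$ as $p\in(0,1)$ is constant, so the product is at most $\bigl(1-(1-p)^{K}\bigr)^{(n+1)/2}=e^{-\Omega(n)}$. Combining this with the bad-event bound, the per-generation probability of covering the front is $\prob{|N_t|<\mu}+e^{-\Omega(n)}=e^{-\Omega(n)}$. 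A union bound over the first $e^{cn}$ generations, for a small enough constant~$c$, then shows that with probability $1-e^{-\Omega(n)}$ the front is not covered within $e^{\Omega(n)}$ generations, which is the claim.

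The hard part is not the arithmetic but making the structural claim $P_{t+1}\subseteq N_t$ rigorous and uniform: one must argue from the non-dominated sorting that noise-free points are always ranked below \emph{every} noisy point regardless of crowding-distance ties, so that the estimate applies to an arbitrary (adversarial) current population rather than merely a typical one. I note that this argument uses $p>1/2$ only to guarantee $|N_t|\ge\mu$ and $p<1$ only to keep $1-(1-p)^{K}$ bounded away from~$1$, so it appears to extend to every constant $p\in(1/2,1)$; the narrower range $1/2<p<10/19$ in the statement is presumably what a drift-based bookkeeping of the number of distinct covered values requires, where the constant must be chosen so that the expected per-generation loss of covered values outweighs the expected gain from mutation.
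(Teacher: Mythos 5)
Your route is genuinely different from the paper's. The paper tracks $X_t:=|P_t\cap F|$, the number of Pareto-optimal individuals, and runs a drift-style bookkeeping argument: a mutation lemma bounds the probability of creating a point of $F$ by roughly $p_{\mathrm{tour}}/e+6/n$, so $\E{Y_t}\le 2X_t/e+O(1)$ offspring land on $F$; since with overwhelming probability only noisy individuals survive and at most a $(10/19)$-fraction of the $X_t+Y_t$ Pareto-optimal points in $R_t$ are noisy, one gets $X_{t+1}<(10/19)(X_t+9X_t/10)=X_t$ in the regime $X_t\ge n/4$, and $X_t+Y_t\le n$ otherwise; this is exactly where the hypothesis $p<10/19$ is consumed. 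You instead bound, uniformly per generation, the probability that all $n+1$ Pareto-optimal values simultaneously possess a noisy representative in $R_t$ (necessary because $P_{t+1}\subseteq N_t$ with overwhelming probability), using only the budget $|R_t|=2\mu=O(n)$ and a pigeonhole argument. Your closing observation is correct and notable: your argument never uses $p<10/19$, so if it goes through it yields the conclusion for every constant $p\in(1/2,1)$, strictly more than the theorem claims; the experiments at $p\in\{0.75,\dots\}$ succeed only because the constants hidden in $(1-(1-p)^{K})^{\Omega(n)}$ are minuscule at $n\le 40$.

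There is, however, one genuine gap: the step asserting that the events ``value $i$ has a noisy copy'' are independent with probability $1-(1-p)^{c_i}$. The multiplicities $c_i$ are not independent of the noise draws: the offspring composition of $Q_t$ is determined by tournament selection, which is driven by the \emph{parents'} noise outcomes (noisy parents sit in higher layers and win tournaments), so conditioning on the vector $(c_i)_i$ biases the parents' noise statuses and destroys both the product form and the value $1-(1-p)^{c_i}$. In particular, if $P_t$ already contains copies of most values, your product could degenerate because those copies' noise is entangled with the conditioning. The repair is a two-stage revelation: first expose the parent noise draws only; by pigeonhole a constant fraction of the $n+1$ values has $O(1)$ parent copies, these per-value events are genuinely independent, and a Chernoff bound gives $\Omega(n)$ values with \emph{no} noisy parent copy with probability $1-e^{-\Omega(n)}$. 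Then expose selection/mutation (fixing the $c_i^{Q}$) and finally the offspring noise draws, which are fresh and independent given the composition; applying your pigeonhole-and-product argument only to those $\Omega(n)$ values and only to their offspring copies yields the claimed $e^{-\Omega(n)}$ bound. With that fix, and a routine check that the initial population does not cover the front, your union bound over $e^{cn}$ generations is valid and the argument stands.
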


The analysis will show that the number of Pareto-optimal individuals is bounded with overwhelming probability. We first give a bound on the probability of creating a Pareto-optimal individual.
\begin{lemma}
\label{lem:reaching-F-via-mutation}
Let $F:=\{1^i 0^{n-i}\mid i \in [n]_0\}$ be the Pareto set of \LOTZ and consider a standard bit mutation creating $y$ from~$x$. Then
\[
    \Prob{y \in F} \le
    \begin{cases}
        1/e + 3/n & \text{ if $x \in F$}\\
        3/n & \text{ otherwise.}
    \end{cases}
\]
\end{lemma}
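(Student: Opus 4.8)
The plan is to treat the two cases of the lemma separately, with a different idea for each.

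For $x\in F$, I would write $x=1^i0^{n-i}$ and observe the key combinatorial fact that the target string $1^j0^{n-j}$ differs from $x$ in exactly $|i-j|$ positions. Hence standard bit mutation produces $y=1^j0^{n-j}$ with probability $(1/n)^{|i-j|}(1-1/n)^{n-|i-j|}$, so that $\Prob{y\in F}=\sum_{j=0}^{n}(1/n)^{|i-j|}(1-1/n)^{n-|i-j|}$. I would then group the terms by $k:=|i-j|$. The value $k=0$ occurs only for $j=i$ and contributes $(1-1/n)^{n}<1/\eulerE$, while each $k\ge 1$ is attained by at most the two indices $j=i\pm k$. Bounding $(1-1/n)^{n-k}\le 1$ and summing the geometric tail $2\sum_{k\ge 1}(1/n)^{k}=2/(n-1)$ yields $\Prob{y\in F}\le 1/\eulerE+2/(n-1)\le 1/\eulerE+3/n$ for $n\ge 3$ (the cases $n\le 2$ are trivial since then $3/n\ge 1$).

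For $x\notin F$, rather than summing over all targets I would exploit a single necessary condition for $y$ to be sorted. A binary string lies in $F$ if and only if it contains no occurrence of a $0$ immediately followed by a $1$; since $x\notin F$, there is an index $i$ with $x_i=0$ and $x_{i+1}=1$. If the mutation leaves both of these bits unchanged---an event of probability exactly $(1-1/n)^2$, since the two bits flip independently---then $y$ still has a $0$ before a $1$ at positions $i,i+1$, so $y\notin F$. Consequently the event $\{y\in F\}$ forces at least one of these two bits to flip, giving $\Prob{y\in F}\le 1-(1-1/n)^2=2/n-1/n^2\le 3/n$.

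The only real decision is how to handle $x\notin F$: the brute-force route, summing $\Prob{y=1^j0^{n-j}}=(1/n)^{d_j}(1-1/n)^{n-d_j}$ over all $j$ (with $d_j$ the Hamming distance from $x$ to $1^j0^{n-j}$), is delicate, because one must control how often the walk $j\mapsto d_j$, which changes by $\pm 1$ at each step and whose down-steps occur exactly at the $1$-bits of $x$, revisits its lowest levels. The adjacency observation sidesteps this entirely and is the crux of the argument, whereas the $x\in F$ computation is routine once the distance is identified as $|i-j|$. I would also note the harmless slack in the stated constant: the two bounds are in fact at most $1/\eulerE+2/n$ and $2/n$, respectively, so no sharp estimate of $(1-1/n)^{n-1}$ is required.
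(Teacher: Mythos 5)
Your proof is correct. For $x\in F$ you follow essentially the same route as the paper: identify the Hamming distance from $1^i0^{n-i}$ to $1^j0^{n-j}$ as $|i-j|$, bound the cloning term by $(1-1/n)^n\le 1/\eulerE$, and sum a geometric tail $2\sum_{k\ge 1}n^{-k}=2/(n-1)\le 3/n$ for $n\ge 3$; this matches the paper's computation line for line. For $x\notin F$, however, you take a genuinely different and cleaner path. The paper argues by cases on the Hamming distance from $x$ to the nearest point of $F$: if that distance is at least $2$ it union-bounds $n+1$ targets each requiring two specific flips, and if it equals $1$ it shows there are at most two points of $F$ at distance $1$ and union-bounds again, arriving at $2/n+n\cdot 1/n^2=3/n$. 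You instead observe the structural characterisation that $y\in F$ forces the destruction of any adjacent ``01'' pattern inherited from $x$, so conditioning on the two bits at a fixed such pattern not flipping already certifies $y\notin F$; this gives $\Prob{y\in F}\le 1-(1-1/n)^2\le 2/n$ in one line, with a slightly better constant and no case distinction on the geometry of $x$ relative to $F$. The paper's approach is more mechanical but generalises to Pareto sets without such a local forbidden-pattern characterisation; yours is shorter and exploits the specific combinatorial structure of \LOTZ's Pareto set. Both establish the stated bound, and your handling of the small-$n$ cases is also sound.
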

\begin{proof}
We assume $n \ge 3$ as otherwise the claimed probability bound is at least~$1$, which is trivial.
Starting from a parent $x=1^i 0^{n-i} \in F$, an offspring in $F$ is created if $x$ is cloned, or if it is mutated into another search point $1^j 0^{n-j}$ with $j \neq i$. We have $\Prob{y = x} = (1-1/n)^n \le 1/e$ and
$\Prob{y = 1^{j}0^{n-j}} \le n^{-|i-j|}$ for all $j \in [n]_0 \setminus \{i\}$ since (depending on whether $j < i$ or $j > i$) either the last $|i-j|$ 1-bits or the first $|i-j|$ 0-bits in $x$ must be flipped. The sum of all probabilities is at most
\[
    \frac{1}{e} + \sum_{d=1}^\infty 2n^{-d} = \frac{1}{e} + 2 \cdot \frac{1/n}{1-1/n} \le \frac{1}{e} + \frac{3}{n}
\]
where the last step used $1/(n-1) \le 3/(2n)$ for $n \ge 3$.

Now assume $x \notin F$. If the Hamming distance to the closest point in~$F$ is at least~2, at least two specific bits must be flipped to create a specific offspring $1^j0^{n-j} \in F$. This has probability at most $1/n^2$. Taking a union bound over $n+1$ possible values of~$j$ yields a probability bound of $(n+1)/n^2 \le 2/n$. If $x$ has Hamming distance~1 to some search point $1^i 0^{n-i} \in F$, it either has a single 0-bit among bits $\{1, \dots, i-1\}$ bits (and an all-zeros suffix) or a single 1-bit among bits $\{i+2, \dots, n\}$ bits (and an all-ones prefix). (Bit positions $i$ and $i+1$ are excluded as otherwise $x \in F$.)
In the former case, if the 0-bit is at position $i-1$, $x$ has Hamming distance~1 to both $1^i 0^{n-i}$ and $1^{i-2}0^{n-i+2}$ and Hamming distance at least~2 to all other search points in~$F$. If the 0-bit is at some smaller index, $x$ has Hamming distance at least~2 to all search points in $F \setminus \{1^i0^{n-i}\}$. The case of a single 1-bit is symmetric. Hence there are always at most two search points at Hamming distance~1 in~$F$, and each one is reached with probability at most $1/n$. To reach any other point in~$F$, two bits must be flipped. Taking a union bound over all probabilities yields a probability bound of $2/n + n \cdot 1/n^2 = 3/n$ in this case.
\end{proof}

Now we prove Theorem~\ref{thm:nsga-ii-noisy-lotz-p05}.
\begin{proof}[Proof of Theorem~\ref{thm:nsga-ii-noisy-lotz-p05}]
Recall that, owing to $\delta>n$, all noisy individuals dominate all noise-free
ones. The Pareto set of \LOTZ is
    $F:=\{1^i 0^{n-i}\mid i \in [n]_0\}$.
We use variables
    $X_t:=|P_t \cap F|$
to denote the number of individuals on $F$ in generation $t$. The Pareto front can only be found if $X_t \ge |F| = n+1$.
It is easy to see that the initial population at time~$0$ will have $X_t < n+1$ with probability at least $1-e^{-\Omega(n)}$, and we assume that this happens.
Since crossover is turned off, in each generation the $\mu$ offspring
are created independently by the binary tournament selection, followed by bitwise mutation.

We first find an upper bound $p_{\mathrm{tour}}$ on the probability that an
individual in $F$ is returned by an application of tournament selection.
A necessary event is
to sample at least one of the two competitors from~$F$. Each competitor is sampled from $F$ with probability $X_t/\mu$. By a union bound, the probability of at least one competitor being from~$F$ is at most $2X_t/\mu$. Thus, $p_{\mathrm{tour}} \le 2X_t/\mu$.

Starting from a parent $x=1^i 0^{n-i} \in F$, the probability of the offspring also being in $F$ is at most $1/e + 3/n$ by Lemma~\ref{lem:reaching-F-via-mutation}.
From a parent $x \notin F$, the probability of creating an offspring
on $F$ is at most $3/n$ by Lemma~\ref{lem:reaching-F-via-mutation}.
Together, the probability of a parent selection followed by mutation creating a search point in~$F$ is at most
\[
    p_{\mathrm{tour}}(1/e+3/n) + 3/n \le \frac{p_{\mathrm{tour}}}{e} + \frac{6}{n} \eqqcolon q.
\]
Let $Y_t \coloneqq |Q_t \cap F|$ be the number of Pareto optimal points in
the offspring population, then $Y_t$ can be bounded by a binomial distribution, $Y_t \preceq \Bin(\mu, q)$.
As $\mu = O(n)$, we have $\expect{Y_t} \leq 2 X_t/e + O(1)$.
%
%
We analyse the distribution of $X_{t+1}$ 
%
and consider two cases:

\underline{Case 1}: $n/4 \leq X_t \leq n$. For any constant $\delta \in (0,9 e/20-1)$
and a sufficiently large $n$, by a Chernoff bound it holds that
\begin{align*}
\prob{Y_t \geq 9X_t/10 \mid X_t \ge n/4}
    &\leq \prob{Y_t \geq (1+\delta)\left(2 X_t/e+O(1)\right)}\\
    &\leq e^{-2 \delta^2 X_t/(3e) - \delta^2 O(1)} = e^{-\Omega(n)}.
\end{align*}
Thus the probability of creating more than
$9X_t/10$ offspring
on $F$ is exponentially small.
%
Additionally, since $p$ is a constant below $10/19$ and above $1/2$
by two applications of Chernoff bounds we have the following.
The probability of having more than $(10/19)(X_t+Y_t)$ noisy individuals on $F$
is 
$e^{-\Omega(X_t+Y_t)}=e^{-\Omega(n)}$. 
The probability of having less than  $\mu$ noisy individuals among
the $2\mu$ individuals in $R_t$ is
 $e^{-\Omega(\mu)}=e^{-\Omega(n)}$. 
Since the survival selection only keeps the $\mu$
best among the $2\mu$ individuals, if the latter event does not occur
then none of the noise-free individuals will survive to the next generation.
Thus, when none of the three events occur,
\ie with a probability of at least $1-e^{-\Omega(n)}$ by a union bound,
$X_{t+1}\le(10/19)(X_t + Y_t)<(10/19)(X_t + 9X_t/10)=X_t$. In other words,
\begin{align*}
\prob{X_{t+1}\geq X_t}
    = e^{-\Omega(n)}.
\end{align*}

\underline{Case 2}: $0 \leq X_t < n/4$.
Since $\expect{Y_t} \le 2X_t/e + O(1) \le n/(2e) + O(1) \eqqcolon m$, we have
$\prob{Y_t \ge 2m} \le e^{-m/3} = e^{-\Omega(n)}$.
This implies $X_t + Y_t \le n/4 + n/e + O(1) \le n$ (for $n$ large enough) with probability $1-e^{-\Omega(n)}$.
If this happens then $X_{t+1} \le n$.

Combining these two cases gives that to reach $X_t \ge n+1$,
an event must occur that has probability $e^{-\Omega(n)}$. The expected time until such an event occurs is $e^{\Omega(n)}$, thus \nsga requires at least $e^{\Omega(n)}$ generations in expectation.
\end{proof}

\section{Experiments}\label{sec:experiment}

To complement the theoretical results, experiments were conducted to
compare the robustness of \nsga with GSEMO on \LOTZfull and \OMMfull.
We considered two noise models, the first is the $(\delta, p)$-Bernoulli noise model using
$\delta \coloneqq n+1$ and various noise probabilities
$p \in \{2^{-2},2^{-3}, \dots, 2^{-6}\}
           \cup \{0.4, 0.5,0.6\}
           \cup \{1 - 2^{-5},1- 2^{-4},1- 2^{-3},1-2^{-2}\}$
to cover noise probabilities
close to $0, 1/2$, and $1$ respectively.
%
%
%
In the second model, to investigate in how far our results translate to
more general noise models, we consider posterior Gaussian noise as in~\cite{FriedrichKKS17}. The noisy
fitness of a search point~$x$ is defined as follows, $\Nor(0, \sigma^2)$ denoting the normal
distribution with mean~$0$ and standard deviation~$\sigma$:
\[
\tilde{f}(x)
    :=f(x) + \vecone\cdot \delta
\text{ where } \delta \sim \Nor(0, \sigma^2).
\]
A Gaussian noise is always added to the fitness after evaluation (to all objectives), that is, there is no noise probability in this model.
For $\sigma = n$ there is a constant probability of $\tilde{f}(x) \succ \tilde{f}(y)$ for any two search points~$x, y$, irrespective of their true fitness. Hence we vary the standard deviation as $\sigma:=n\cdot q$ where $q \in \{2^0, 2^{-1}, 2^{-3},2^{-4}\}$.
We used problem size $n \in \{20, 30, 40\}$,
$p_c = 0.9$ and one-point crossover. For \nsga, the population size
is set to $\mu=9(n+1)$.
%
%
For each experiment, 50 runs were performed.
In each run, the algorithm is stopped either
when the whole Pareto front is covered and then the number of iterations
is recorded, or when the number of fitness evaluations exceeded
$10 n^3$.


\begin{figure}[th]\centering
\includegraphics[width=0.8\linewidth]{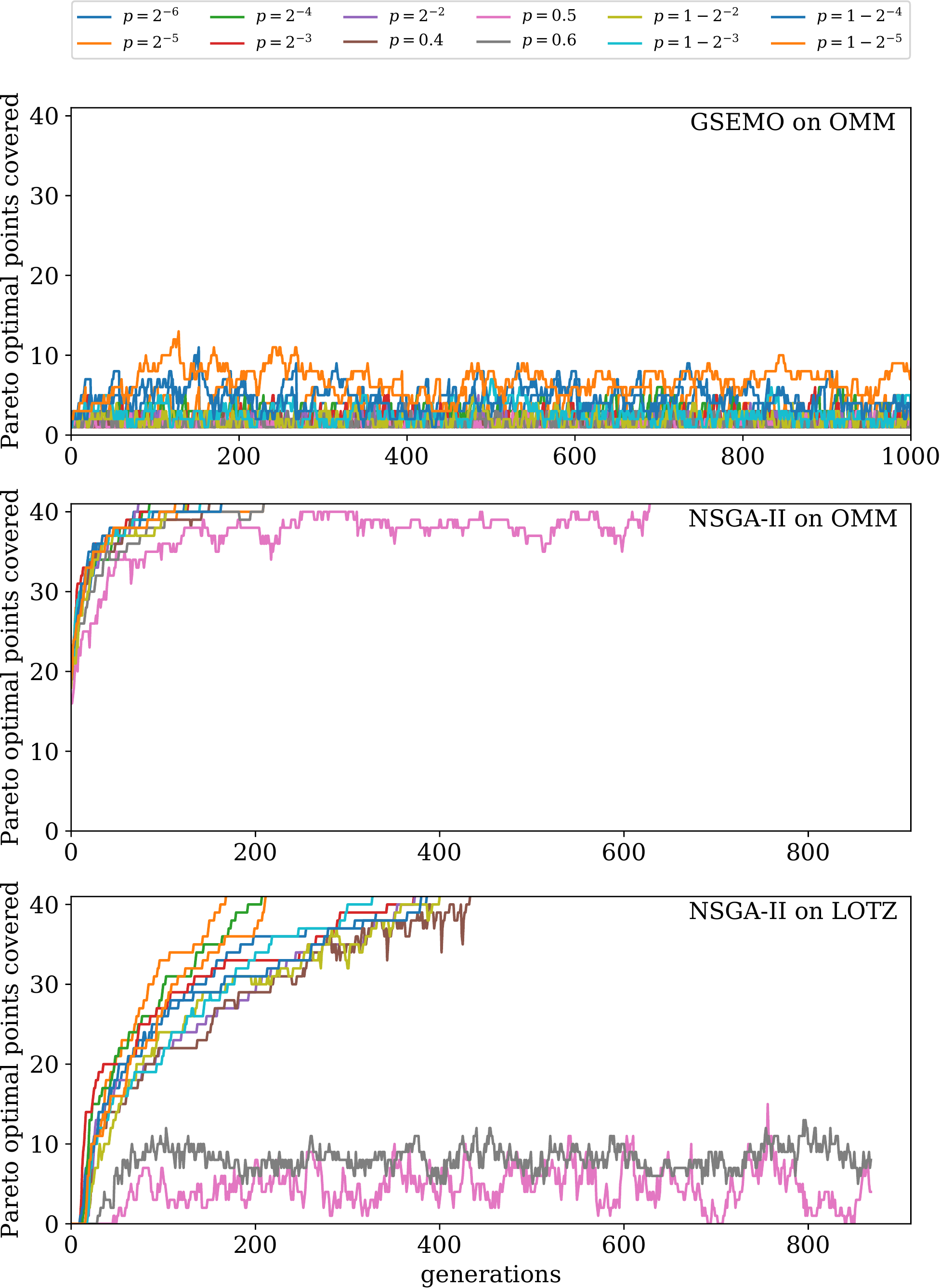}
\caption{Number of Pareto optimal points 
         covered per generation
         by GSEMO and \nsga on \LOTZ, \OMM with $n=40$ under
         the Bernoulli-$(n+1,p)$ noise model in single runs.}
\ifarxiv\else
\Description{Figure: Number of Pareto optimal points covered per generation
             by GSEMO and \nsga on \LOTZ, \OMM with $n=40$.}
\fi
\label{fig:pareto-opt-per-gen}
\end{figure}

As predicted by our results from Section~\ref{sec:analysis-gsemo}, GSEMO failed to cover the Pareto front of both functions within the time limit in all experiments. The first plot of Figure~\ref{fig:pareto-opt-per-gen}
shows the number of different points
on the Pareto front of \OMM being
covered by GSEMO over time for single run on the Bernoulli noise
model. This number never exceeds 16, which is below $40\%$ of the size of the Pareto front.
This is for the easier function \OMM; on \LOTZ the maximum value was~4, that is, less than 10\% of the front size (plot is omitted for lack of space).
The same issue is also evident in the last plot of the figure
for \nsga on \LOTZ with the noise probabilities $p=0.5$ and $p=0.6$.
The success rate (fraction of runs covering the Pareto front) was always 100\% for \nsga in all settings, except for $p \in \{0.5, 0.6\}$ on \LOTZ, where it was 0\%.
This is aligned with our theoretical prediction (Theorem~\ref{thm:nsga-ii-noisy-lotz-p05})
there is a phase transition at $p=1/2$ for \nsga.
%
Note that
in the experiments $p_c = 0.9$ and Theorem~\ref{thm:nsga-ii-noisy-lotz-p05} is for $p_c=0$ and only claims the negative effect for $p<10/19$ which is below $0.6$. The empirical results thus suggest that the findings extend beyond the setting from Theorem~\ref{thm:nsga-ii-noisy-lotz-p05}.

\begin{table}[th]
	\centering
	\begin{tabular}{@{}crrrrrr@{}}
		\toprule
		 & \multicolumn{3}{c}{\LOTZ}  &  \multicolumn{3}{c}{\OMM} \\
        \cmidrule(r){2-4}\cmidrule(r){5-7}
		$p$ & $n=20$ & $n=30$ & $n=40$ & $n=20$ & $n=30$ & $n=40$ \\
        \midrule
		$2^{-6}$ & 24090 & 84051 & 202822 & 12346 & 35982 & 79927\\
		$2^{-5}$ & 23317 & 93687 & 208128 & 10273 & 36706 & 85786\\
		$2^{-4}$ & 24919 & 92294 & 218483 & 12987 & 41440 & 82101\\
		$2^{-3}$ & 27898 & 93297 & 212550 & 12686 & 37987 & 80848\\
		$2^{-2}$ & 31856 & 109701 & 273906 & 14684 & 37235 & 93562\\
        $0.4$ & 34212 & 119540 & 313298 & 16041 & 48848 & 89887 \\
		$0.5$ & \myc 80301 & \myc 270145 & \myc 640453 & 29160 & 95859 & 215240\\
        $0.6$ & \myc 80301 & \myc 270145 & \myc 640453 & 18604 & 50826 & 121581 \\
		$1-2^{-2}$ & 35230 & 156015 & 470869 & 14194 & 51884 & 99495\\
		$1-2^{-3}$ & 29274 & 102933 &221579 & 14948 & 43529 &97910 \\
		$1-2^{-4}$ & 28350 & 89008 &236429 & 12572 & 37402 & 93341\\
		$1-2^{-5}$ & 25692 & 85917 & 216972 & 11705 &39881  &88440 \\
        \bottomrule
	\end{tabular}
	\caption{Average running time of \nsga on \LOTZ and \OMM
             under the $(n+1,p)$-Bernoulli noise model. Runs were stopped after $10n^3$ evaluations. The success rate was always 100\%, except for the shaded cells, where it was~0\%.}
    \label{tab:nsgaii-bernoulli}
    \begin{tabular}{crrrrrr}
		\toprule
		& \multicolumn{3}{c}{\LOTZ}  &  \multicolumn{3}{c}{\OMM} \\
		\cmidrule(r){2-4}\cmidrule(r){5-7}
		$\sigma$ & $n=20$ & $n=30$ & $n=40$ & $n=20$ & $n=30$ & $n=40$ \\
		\midrule
		$n \cdot 2^{-4}$ & 100\% & 100\% & 100\% & 100\% & 100\% & 100\% \\
		$n \cdot 2^{-3}$ & 100\% & 100\% &  96\% & 100\% & 100\% &  70\% \\
		$n \cdot 2^{-2}$ &  65\% &  7\% &  10\% &  40\% &  7\% &  4\% \\
		$n \cdot 2^{-1}$ &  0\% &  0\% &  0\% &  0\% &  0\% &  0\% \\
		$n \cdot 2^0$ &  0\% &  0\% &  0\% &  0\% &  0\% &  0\% \\
		\bottomrule
	\end{tabular}
	\caption{Average success rate of \nsga on \LOTZ and \OMM
		under the Gaussian noise model.}
	\label{tab:nsgaii-gaussian_success_percentage}
\end{table}

Table~\ref{tab:nsgaii-bernoulli} shows the average number of fitness evaluations when running \nsga on
\LOTZ, under the Bernoulli noise model.
We see that the average running time increases as $p$ approaches $1/2$ and that it decreases when approaching $p=0$ or $1$. 
With the Gaussian noise model, \nsga is only able to cover the Pareto front of the noisy
functions
when the standard deviation of the noise is small,
\ie $\sigma \in \{ n\cdot 2^{-4}, n\cdot 2^{-3}\}$.
Table~\ref{tab:nsgaii-gaussian_success_percentage} shows the success rate of \nsga under Gaussian noise. While \nsga is effective for small Gaussian noise, it starts to fail when the standard deviation is increased.

\section{Conclusions}

We have given a first example on which \nsga provably outperforms GSEMO and performed a first theoretical runtime analysis of EMO algorithms in stochastic optimisation. While GSEMO is very sensitive to noise, \nsga can cope well with noise, even when the noise strength is so large that we have domination between noisy and noise-free search points. This holds when the population size is large enough to enable useful search points to survive and when the noise probability is less than $1/2$.
However, for noise probabilities slightly larger than $1/2$, even \nsga requires exponential expected runtime,
thus it experiences a phase transition at $p=1/2$.

There are many open questions for future work. What if noise is applied to all objectives independently?
Can theoretical results be shown for other noise models like Gaussian posterior noise? What mechanisms can prevent noise from disrupting \nsga?

\ifreview
\else

\ifarxiv
\section*{Acknowledgments}
This work benefited from discussions at Dagstuhl seminar~22081 ``Theory of Randomized Optimization Heuristics''.
The third author was supported by the Erasmus+ Programme of the European Union.
\else
\begin{acks}
This work benefited from discussions at Dagstuhl seminar~22081 ``Theory of Randomized Optimization Heuristics''.
The third author was supported by the Erasmus+ Programme of the European Union.
\end{acks}
\fi
\bibliographystyle{ACM-Reference-Format}
\bibliography{references} 


\end{document}